\documentclass{article}

 \usepackage[preprint]{neurips_2026}

\usepackage[utf8]{inputenc} % allow utf-8 input
\usepackage[T1]{fontenc}    % use 8-bit T1 fonts
\usepackage{hyperref}       % hyperlinks
\usepackage{url}            % simple URL typesetting
\usepackage{booktabs}       % professional-quality tables
\usepackage{multirow}       % multi-row cells
\usepackage{graphicx}       % rotatebox / resizebox
\usepackage{amsfonts}       % blackboard math symbols
\usepackage{nicefrac}       % compact symbols for 1/2, etc.
\usepackage{microtype}      % microtypography
\usepackage[table]{xcolor}  % colors + \rowcolor / \cellcolor

\usepackage{changepage}

\definecolor{azure}{HTML}{74BEFF}

\usepackage{custom,customboxes}

\title{The Rate-Distortion-Polysemanticity Tradeoff in SAEs}

\author{%
      Tommaso Mencattini$^*$ \\
    EPFL\\
    \texttt{tommaso.mencattini@epfl.ch} \\
    \And
      Francesco Montagna\thanks{Equal contribution. Order based on seniority.} \\
    Institute of Science and Technology Austria\\
    \texttt{francesco.montagna@ist.ac.at} \\
  \And
  Francesco Locatello \\
Institute of Science and Technology Austria\\
}

\begin{document}

\maketitle

\begin{abstract}
    \looseness-1Sparse Autoencoders (SAEs) that can accurately reconstruct their input (minimizing distortion) by making efficient use of few features (minimizing the rate) often fail to learn monosemantic representations (highly interpretable), limiting their usefulness for mechanistic interpretability. In this paper, we characterise this tension in learning faithful, efficient, and interpretable explanations, introducing the Rate-Distortion-Polysemanticity tradeoff in SAEs. Under toy-modeling assumptions, we theoretically and empirically show that restricting the SAE to be monosemantic necessarily comes with an increase in rate and distortion. Assuming a generative model behind the input observations, we further demonstrate that the degree of polysemanticity of optimal SAEs is determined by the training data distribution, especially by the probability of features to co-occur. Finally, we extend the analysis to real-world settings by deriving necessary conditions that a polysemanticity measure should satisfy when the data-generating process is unknown, and we benchmark existing proxy metrics on SAEs trained on Large Language Models. 
    Taken together, our findings show that polysemanticity is a data problem that should be accounted for when addressing it at the architectural and optimization level. 
\end{abstract}

\begin{figure}[!h]
    \centering
    \includegraphics[width=\linewidth]{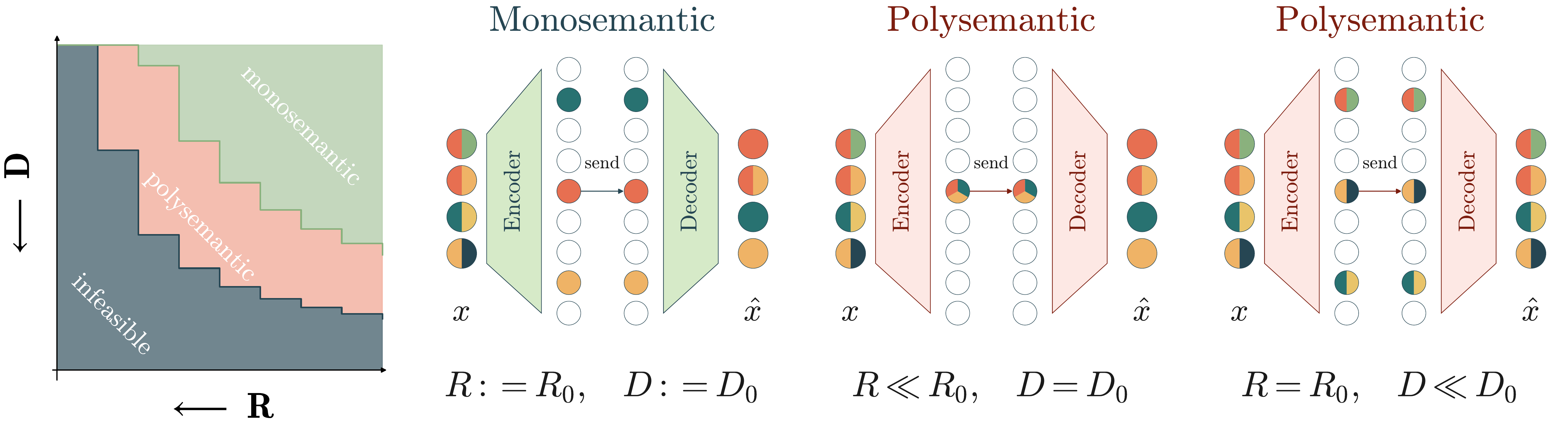}
    \caption{The rate-distortion-polysemanticity tradeoff. Enforcing monosemanticity on a SAE lifts the achievable rate-distortion frontier.
    }
    \label{fig:intro}
\end{figure}

% \FM{For @Tom: I use azure for figure related comments}
% \textcolor{azure}{\textbf{FM}: Comment on \cref{fig:intro}, Very nice. Some suggestions where I see it could be improved:
%     \begin{enumerate}
%         \item Concerning the color coding: I would make each input node of a different color (each color, correspond to a feature. And we carry those colors all over the figure. For the polysemantic node, you have a node of multiple colors. For the decoder part, the missed features should be white (we simply do not activate the feature we should)
%         \item In general, font sizes are too small in the encoder-decoder plots
%         \item Do we need the arrow with "transmit activation"? I mean, before and after the arrow the activations are identical, so is redundant. Moreover, I think it would be more sensible to have a. after the encoder the activated latent features b. after the decoder the reconstruction with those features 
%         \item The difference between the two polysemantic plots is in rate and distortion: taht part especially suffers from having tiny fonts. It is a very important bit of information, and so it should be highlighted: increase the font, and potentially use something (font color? not sure, but I leave the idea) to highlight R << R0, or D << D0
%     \end{enumerate}
%     }

\section{Introduction}

\looseness-1Sparse autoencoders have become a standard tool for explaining neural activations in terms of a dictionary of learned features \citep{cunningham2023saes, bricken2023monosemanticity}. A useful SAE explanation should satisfy three desiderata: it should be \emph{faithful}, so that reconstruction of the input is accurate; \emph{short}, so that few learned features suffice for a good reconstruction; and \emph{interpretable}, so that each of these features corresponds to a single concept, rather than to many unrelated ones. Current SAE evaluation captures only the first two desiderata, via sparsity and mean-squared error \citep{gao2024scaling}. The third, interpretability, is left implicit: as a result, SAEs that are optimal in sparsity and reconstruction are routinely polysemantic, learning features that mix multiple underlying concepts rather than cleanly aligning with one \citep{chanin2024absorption,chanin2025hedging,chanin2025sparsewrong}. 
% This is caused by monosemantic SAEs often not been able to reach optimality measured with sparsity and reconstruction \citep{chanin2025sparsewrong}. 
% \FM{Remove?}In an attempt to fix this, the community has proposed numerous architectural modifications and training protocols, including TopK, Gated, JumpReLU, BatchTopK, and Matryoshka SAEs \citep{gao2024scaling,rajamanoharan2024gated,rajamanoharan2024jump,bussmann2024batchtopksparseautoencoders,bussmann2025learningmultilevelfeaturesmatryoshka}. None of these solution had solved the underlying tension, suggesting that the problem is more fundamental in nature.

\looseness-1 The clash between faithfulness, shortness, and monosemanticity becomes familiar under an information-theoretic lens. Viewing an SAE as a communicator of explanations \citep{ayonrinde2024mdl} means we identify sparsity with \emph{rate} and reconstruction error with \emph{distortion} \citep{shannon1959,alemi2018brokenelbo} and SAEs' training is a rate-distortion problem. 
Unfortunately, reasoning in terms of rate and distortion is known to be insufficient to analyse additional qualitative properties of \textit{how} good reconstruction is achieved. \citet{blau2019rdp} made the point in lossy image compression, where low rate and low distortion admit \textit{perceptually} poor outputs, and similar arguments exist for representation learning~\citep{tschannen2018autoencoders}. For SAEs, we desire that good reconstruction is achieved with an efficient use of \textit{interpretable} features, rather than with their sacrifice; this motivates the introduction of \textit{polysemanticity} as an explicit additional property of the SAE, next to rate and distortion.

To formalize the interplay between rate, distortion, and polysemanticity in SAEs we posit a data-generating process (DGP) with ground-truth concept directions, a standard approach in theoretical SAE works \citep{chanin2024absorption,chanin2025hedging,ayonrinde2024mdl,chanin2025sparsewrong,chanin2026synthsaebench,rajendran2024from}; under this generative hypothesis, we can formalize a measure of polysemanticity, comparing learned features and groundtruth concepts.
% Empirically, prior work has identified mechanisms through which polysemantic features improve the sparsity-reconstruction tradeoff \citep{chanin2025hedging,chanin2024absorption,ayonrinde2024mdl}, but without a general measure of polysemanticity, these findings remain largely observational. Instead, we turn this into a fundamental property of SAEs by positing a data-generating process (DGP) with ground-truth concept directions, following a standard methodological strategy in theoretical SAE works \citep{chanin2024absorption,chanin2025hedging,ayonrinde2024mdl,chanin2025sparsewrong,chanin2026synthsaebench,rajendran2024from}. 
Once this structure is in place, a clear pattern emerges (\cref{fig:intro}): polysemanticity is a \textit{predictable} property of the optimal SAE, driven by the probability of co-occurring concepts in the DGP. When concepts frequently co-occur, a width- and rate-constrained SAE reduces distortion by representing them jointly. This is the \textit{rate-distortion-polysemanticity (RDP) tradeoff}. Our theoretical conclusions has real-world consequences: had we access to the DGP, a reliable polysemanticity measure would exhibit an RDP tradeoff. Then, even when the DGP is unknown, our findings give a theoretically grounded criterion for what a valid polysemanticity metric must look like, and let us choose between real-world interpretability metrics when they disagree.
% These toy-model insights do more than describe the DGP setting: they give us a theoretically grounded criterion for what a valid polysemanticity metric must look like, and let us choose between real-world interpretability metrics when they disagree.
\looseness-1We apply this criterion to open-source SAEs \citep{karvonen2025saebench}: no widely-used metric clearly passes, but the criterion still lets us rank candidates and rule out the ones that cannot be tracking polysemanticity at all. Surprisingly, the most expensive metrics, such as the LLM-based AutoInterp \citep{autointerp}, perform worse than far simpler baselines.  Overall, this reframes \textit{the emergence of polysemanticity as the interaction between architectures, optimization, and the likelihood of co-occurring events in the data.}
% \textcolor{teal}{Overall, this reframes where future work should focus: \textit{reducing polysemanticity requires architectures and optimization strategies that can mitigate the risks that co-occurrences in the training distribution impose on interpretability.}}
% \textcolor{teal}{Overall, this reframes \textit{the emergence of polysemanticity as the interaction between architectures, optimization, and data. Future strategies can mitigate the risks that co-occurrences in the training distribution impose on interpretability.}}
% is less a matter of designing new architectures than of curating the training data and the metrics we use to evaluate them}.
To sum up, this paper tackles the following problem:

% Defining the RDP frontier, \FL{therefore?}however, requires quantifying polysemanticity. Prior work has identified empirical mechanisms through which polysemantic features improve the sparsity--reconstruction tradeoff \citep{chanin2025hedging,chanin2024absorption,ayonrinde2024mdl}, but without a general measure of polysemanticity, these findings remain largely observational. \FL{Thus, to the best of our knowledge, this trade-off was not explicitly characterized before}. The difficulty is that measuring polysemanticity requires a semantic reference (e.g., the underlying concepts), whereas in real-world SAEs the relevant concepts are typically implicit and unknown. To overcome this limitation, we posit a data-generating process (DGP) with ground-truth concept directions, following a standard methodological strategy in theoretical SAE work \citep{chanin2024absorption,chanin2025hedging,ayonrinde2024mdl,chanin2025sparsewrong,chanin2026synthsaebench, rajendran2024from}. Once this structure is in place, a clear pattern emerges (\cref{fig:intro}): in the toy model, polysemanticity is a \textit{predictable} property of the optimal SAE, driven by the probability of co-occurring concepts in the data-generating process. When concepts frequently co-occur, a width- and rate-constrained SAE reduces distortion by representing them jointly. These toy-model insights also constrain the properties that a faithful proxy for polysemanticity must satisfy in real-world SAEs. To sum up, this paper tackles the following problem:
\vspace{-2mm}
\begin{emptyroundbox}
\paragraph{Problem definition.}
We study the RDP tradeoff by quantifying the cost that monosemanticity imposes on rate and distortion, and show that polysemanticity is a predictable optimal response to the co-occurrence pattern of concepts in the data-generating process.
\end{emptyroundbox}
\vspace{-1.5mm}

\textbf{Contributions. } (i) We formalize the rate-distortion-polysemanticity tradeoff and prove that tightening the polysemanticity budget worsens the achievable rate or distortion. (ii) We construct a tractable toy model in which the probability of co-occurrence of features in the data-generating process predictably determines optimal polysemanticity. (iii) We derive necessary monotonicity conditions for any real-world polysemanticity proxy and use them to evaluate proxies on Gemma and GPT-2 SAEs. The practical takeaway: reducing polysemanticity requires architectures and optimization strategies aware of the risk that co-occurrences in the training distribution imposes on interpretability.
% attention to the training distribution, not only to architectures or optimization heuristics.

\section{Background}
\looseness=-1In this section, we define main concepts to develop our theory. First, we introduce sparse autoencoders and their information theory interpretation. Second, we introduce the DGP and define \textit{polysemanticity} and the \textit{Rate-Distortion-Polysemanticity} function. Throughout, we denote matrices with uppercase letters $X$, vectors with lowercase, bold letters $\mathbf x$, and scalar with lowercase letters $x$. Extensive discussion of related works in information theory, machine learning, and SAEs can be found in Appendix~\ref{relworks}.

\subsection{SAEs as compressed explanations: from reconstruction to rate-distortion}\label{sec:sae_comm}

Sparse autoencoders were designed to map neural network activations from a small, polysemantic basis to a large, sparse, and approximately monosemantic one \citep{bricken2023monosemanticity, cunningham2023saes}. In practice, a SAE maps an activation $\mathbf x\in\mathbb{R}^d$ to a sparse latent code $\mathbf z \in\mathbb{R}^m$ and reconstructs $\mathbf x$ from that code. With encoder and decoder weights $W_{\mathrm{enc}},W_{\mathrm{dec}}\in\mathbb{R}^{m\times d}$, biases $\mathbf b_{\mathrm{enc}},\mathbf b_{\mathrm{dec}}$, a sparsifying nonlinearity $\sigma$ (e.g.\ ReLU or TopK), and an optional sparsity regularizer $\mathcal S$ (commonly the $\ell_1$ norm), the SAE is trained to solve
\begin{equation}\label{eq:sae}
\operatorname*{arg\,min}_{\theta\in\Theta} \; \mathbb E\!\left[\|\mathbf x-\hat{\mathbf x}_\theta\|_2^2+\lambda\,\mathcal S(\mathbf z_{\theta}(\mathbf x))\right]
\quad\text{s.t.}\quad
\begin{cases}
\mathbf z_\theta(\mathbf x)=\sigma(W_{\mathrm{enc}}\mathbf x+\mathbf b_{\mathrm{enc}}), \\
\hat{\mathbf x}_\theta=W_{\mathrm{dec}}^\top \mathbf z_\theta(\mathbf x)+\mathbf b_{\mathrm{dec}},
\end{cases}
\end{equation}
where $\theta = (W_{\mathrm{enc}}, W_{\mathrm{dec}}, \mathbf b_{\mathrm{enc}}, \mathbf b_{\mathrm{dec}}) \in \Theta$ collects all SAE parameters. Henceforth, the decoder is referred as \textit{dictionary} and its columns as \textit{atoms}.

The task of learning good parameters $\theta$ is usually described as a sparse coding or reconstruction problem \citep{gao2024scaling,hindupur2025projectingassumptionsdualitysparse,oneill2025computeoptimalinferenceprovable}. We adopt instead a communication view,.
% in the spirit of \citet{ayonrinde2024mdl}. 
Consider a sender and a receiver who have agreed in advance on the SAE parameters $\theta$. To transmit an activation $\mathbf x$, the sender computes the sparse code $\mathbf z_\theta(\mathbf x)$ and transmits only the code; the receiver applies the decoder to recover $\hat{\mathbf x}_\theta \approx \mathbf x$. The \emph{rate} is the cost of transmitting the code and the \emph{distortion} is the reconstruction error; the parameters $\theta$ are shared infrastructure and do not count toward the rate. 
% This differs from \citet{ayonrinde2024mdl}, who transmit the decoder as part of the message via two-part MDL coding \citep{grunwald2007mdl}, 
This perspective places our setup in Shannon's classical rate-distortion framework \citep{shannon1959},
\begin{equation}\label{eq:classical_rdf}
R(D_0)=\inf_{p(\hat{\mathbf x}\mid \mathbf x):\,\mathbb E[d(\mathbf x,\hat{\mathbf x})]\le D_0} I(\mathbf x;\hat{\mathbf x}),
\end{equation}
which gives the smallest rate needed to stay below a distortion budget $D_0$.  For SAEs, the natural analogues of communication cost and fidelity are the number of active latents and the mean squared reconstruction error, yielding the SAE rate-distortion frontier
\begin{equation}\label{eq:rdf}
R^\star(D_0):=\inf_{\theta\in\Theta}\{R(\theta):D(\theta)\le D_0\}
\quad\text{s.t.}\quad
\begin{cases}
R(\theta) := \mathbb E[\|\mathbf z_\theta(\mathbf x)\|_0], \\
D(\theta) := \mathbb E[\|\mathbf x-\hat{\mathbf x}_\theta\|_2^2] ,
\end{cases}
\end{equation}
where $R(\theta)$ measures how short (efficient) the explanation is, and $D(\theta)$ how faithful it is. In the next section, we argue why this novel two-axis picture is still insufficient, and we define a measure of polysemanticity to solve this problem.

\subsection{A semantic reference frame and the RDP objective}\label{sec:dgp_poly_rdp}

\begin{figure}[t]
    \centering
    \includegraphics[width=\linewidth]{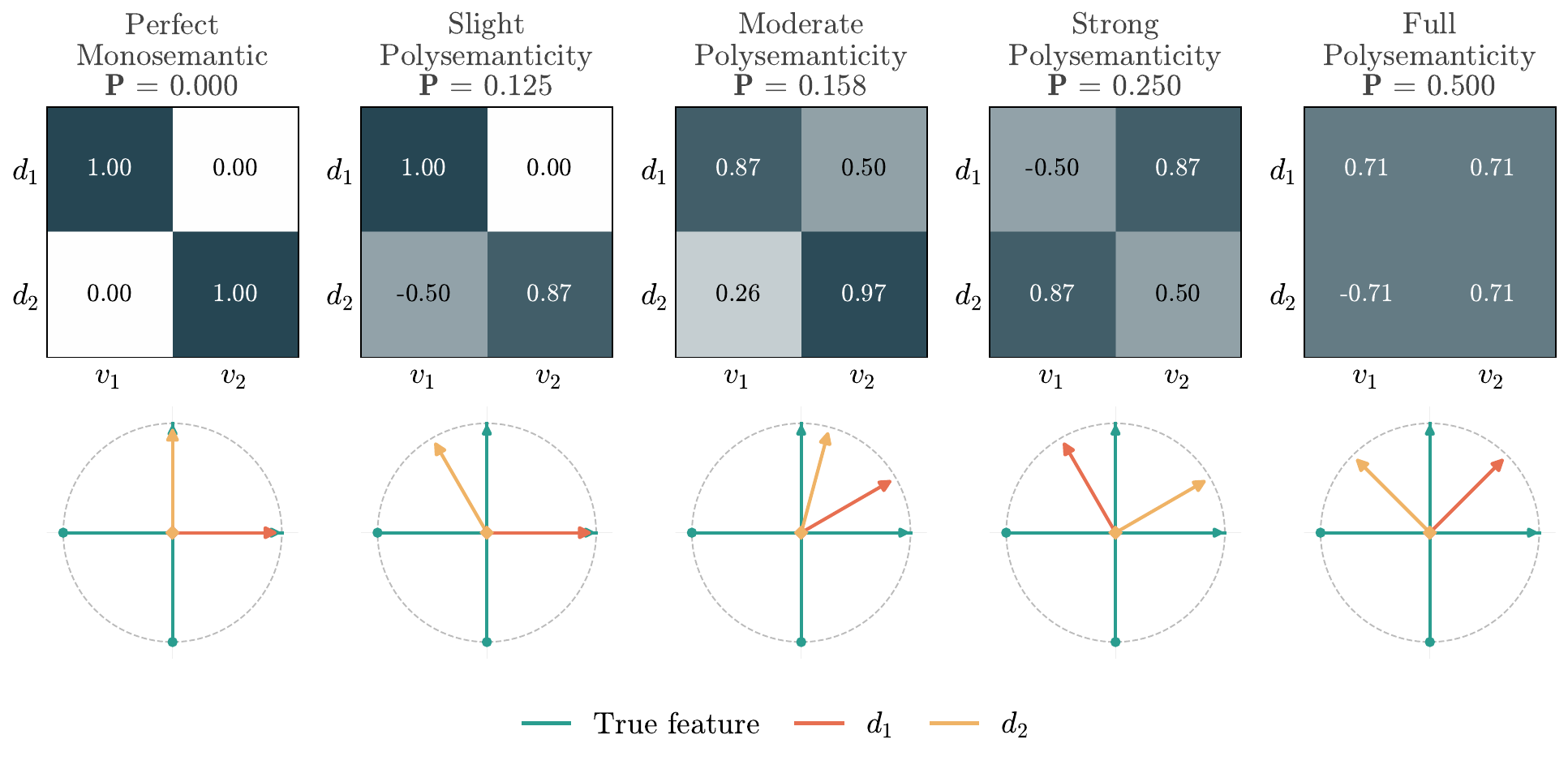}
    \caption{Five representative values of the polysemanticity metric $P$ (\cref{def:polysemanticity}). The first line illustrates the cosine similarity matrix between ground truth ($v_i$) and dictionary's ($d_i$) features.}
    \label{fig:poly_examples}
\end{figure}

% \textcolor{azure}{\textbf{FM}:Comment on \cref{fig:poly_examples}:
% \begin{enumerate}
%     \item Font size increase
%     \item If I don't mistake, in the main text you use $w$, not $d$. If thats the case, be consistent, both in the matrices and in the legend of the geometric representation of polysemanticity
%     \item Maybe, remove the tick on the vertical and horizontal axis 
% \end{enumerate}}

The rate-distortion view is still incomplete for interpretability: an SAE can attain low rate and low distortion while learning heavily mixed features \citep{chanin2024absorption,chanin2025hedging}, and conversely monosemantic SAEs achieve suboptimal rate-distortion values \citep{chanin2025sparsewrong}. This mirrors the failure mode identified by \citet{tschannen2018autoencoders,blau2019rdp}, where the rate-distortion curve is opaque to qualitatively important properties (e.g., representation usefulness or perceptual quality) that must be explicitly modeled as an additional axis. Our main claim is that SAEs have the same structural problem, with polysemanticity as the missing axis. Making this coordinate measurable, however, requires a semantic reference: polysemanticity is not a function of the SAE alone, but of the alignment between its atoms and the underlying concepts. To formalize this notion, we therefore introduce a data-generating process (DGP).

\paragraph{Data Generating Process. } Let $V=(\mathbf v_1,\dots,\mathbf v_n)\in\mathbb{R}^{d\times n}$ be ground-truth concept directions, and let $\mathbf c=(c_1,\dots,c_n)\in\{0,1\}^n\sim p_{\mathbf c}$ indicate which concepts are active. We assume
\begin{equation}\label{eq:dgp}
\mathbf x=\sum_{\ell=1}^n c_\ell \mathbf v_\ell=\sum_{\ell\in S}\mathbf v_\ell,
\qquad
S=\{\ell:c_\ell=1\},
\end{equation}
where $S$ collects the indices of the concepts active in $\x$ generation, and $\mathcal D = (V,p_{\mathbf c})$ collects all parameters of the DGP. This DGP is a closed-form instantiation of the Linear Representation Hypothesis \citep{elhage2022superposition,park2024lrh}, under which an LLM represents concepts as (nearly) orthogonal directions in activation space and only a small subset fires on any given token. Now that we have fixed the semantic reference (DGP), we can define polysemanticity.

\begin{definition}[Polysemanticity]\label{def:polysemanticity}
Let $V \in \mathbb{R}^{d \times n}$ collect $n$ ground-truth concept directions. Consider $m$~SAE's feature rows $\hat{\mathbf v}_1,\dots,\hat{\mathbf v}_m$. Let $C \in \mathbb{R}^{m\times n}$ be the matrix of cosine similarities $c_{i\ell} = \cos(\hat{\mathbf v}_i, \mathbf v_\ell)$. The polysemanticity of the dictionary $(\hat{\mathbf v}_i)_{i=1}^m$ is:
\begin{equation}\label{eq:polysemanticity}
P(C) := \frac{1}{m}\sum_{i=1}^m\Bigl(1-\max_\ell c_{i\ell}^2\Bigr) \; \in [0,1].
\end{equation}
\end{definition}

% Applying $P$ to the encoder and decoder, we denote their cosine-similarity matrices by $C_{\mathrm{enc}}$ (with entries $c^{\mathrm{enc}}_{i\ell}=\langle \mathbf w^{\mathrm{enc}}_i,\mathbf v_\ell\rangle/(\|\mathbf w^{\mathrm{enc}}_i\|\,\|\mathbf v_\ell\|)$) and $C_{\mathrm{dec}}$ (defined analogously from the decoder rows), and track both sides jointly via
% \begin{equation}\label{eq:polysemanticity_joint}
% P_{\mathrm{joint}}(\theta):=P(C_{\mathrm{enc}})+P(C_{\mathrm{dec}}).
% \end{equation}
Intuitively, $P(C)$ measures the average ``spread'' of each feature row across the ground-truth concepts: it vanishes exactly when every nonzero row is aligned, up to sign, with a single concept, and grows as rows become mixtures of several concepts. \cref{fig:poly_examples} illustrates five representative values of $P$ on a $2\times 2$ dictionary. We can now state the paper's central object.

\begin{definition}[Rate-distortion-polysemanticity function]\label{def:rdpf}
Fix the data generating process $\mathcal D$ and let $\Theta$ denote a class of SAEs with rate, distortion, and polysemanticity functionals $R(\theta)$, $D(\theta)$, and $P(\theta)$. For a distortion budget $D_0\ge 0$ and a polysemanticity budget $P_0\ge 0$, the \emph{rate-distortion-polysemanticity function} is the smallest rate achievable while simultaneously meeting both budgets,
\begin{equation}\label{eq:rdpf}
R^\star(D_0,P_0):=\inf_{\theta\in\Theta}\bigl\{R(\theta):D(\theta)\le D_0,\;P(\theta)\le P_0\bigr\}.
\end{equation}
\end{definition}

\begin{figure}[!t]
    \centering
    \includegraphics[width=\linewidth]{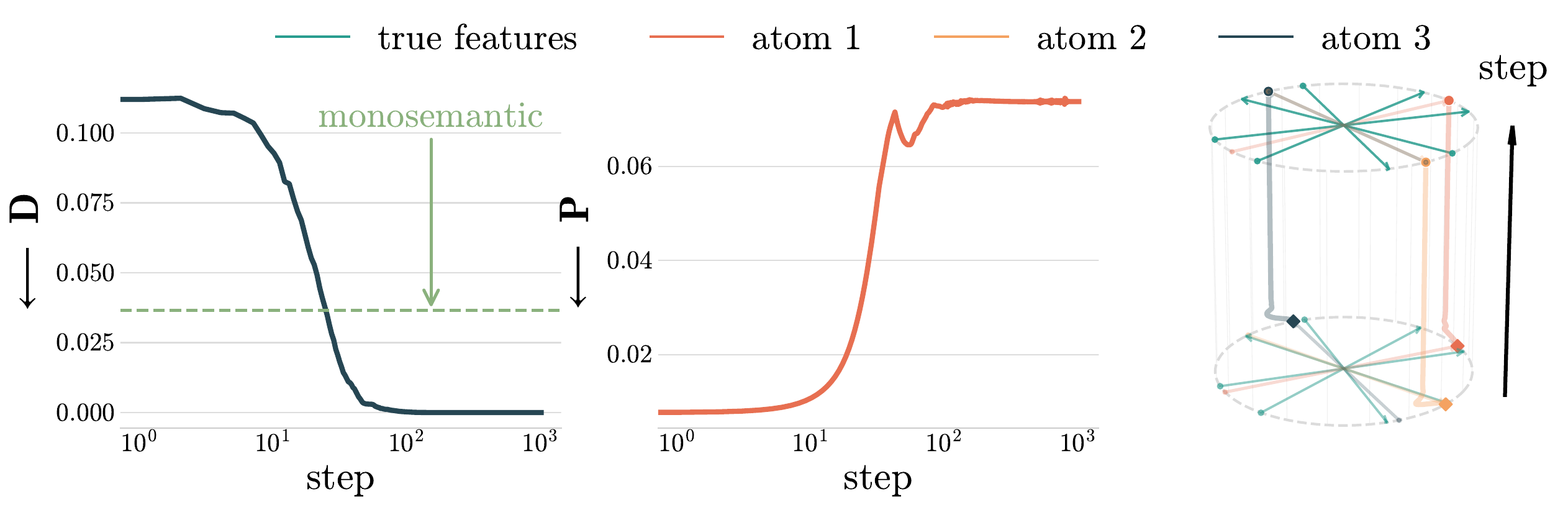}
    \caption{Training dynamics of a width-3 Top-2 SAE on a DGP with four orthogonal concepts $\mathbf v_i \in \R^2$ fired uniformly at probability $p=0.2$. During the optimization, the dictionary rotates from a nearly monosemantic initialization to a polysemantic configuration in order to minimize distortion.}
    \label{fig:ib_training}
\end{figure}

Intuitively, the RDP function is the best (smallest) achievable rate by an SAE that is required to meet distortion and polysemanticity constraints. 

Equipped with this third axis, we can finally ask what ordinary SAE evaluation cannot: \emph{how much extra rate or distortion must be paid to enforce semantic purity?} \cref{fig:ib_training} illustrates this with a small example: the toy SAE rotates away from its monosemantic initialization, with distortion dropping past anything a monosemantic dictionary can reach at the same rate. The gap between the two curves is the price monosemanticity would cost. In other words, the optimal SAE is polysemantic by design, not by accident. What determines how large this price is? \cref{sec:toy_model} shows it is the interplay between the rate budget and the concept co-occurrence pattern in the DGP.

\section{A Toy Model of Polysemanticity}\label{sec:toy_model}

In this section, we give an informal, narrative account of why this gap exists and which properties of the DGP control its size. A rigorous treatment is deferred to \cref{sec:toy}. The decisive quantity is the co-occurrence pattern of concepts: when concepts fire together often enough, the SAE can lower distortion by bundling them into polysemantic features that monosemantic dictionaries cannot represent under rate constraints. 
 % The full formal treatment including exact characterizations of the monosemantic frontier \FM{@Tom: what is the \textit{monosemantic frontier}? Consider replacing with more transparent wording} and quantitative rate and distortion-tax theorems \FM{@Tom: rate and distortion \textit{tax} doesn't yet have a meaning here}, is deferred to \cref{sec:toy}.
\cref{fig:synthetic_rdp} previews the three claims we will make: tightening any one of the three budgets (rate, distortion, polysemanticity) forces the other two up, and the $R$--$D$ frontier at fixed $P_0$ takes the staircase shape that the analysis predicts under aligned-atoms assumptions. The rest of the section walks through these claims: that the trade-off must exist (\cref{sec:toy_tradeoff}), and can be quantified in terms of additional cost (call it \textit{premium}) in rate or distortion imposed by monosemanticity (\cref{sec:toy_premium}), and that the premium is governed by the probability of co-occurrence of concepts in the DGP (\cref{sec:toy_cooccurrence}).

Throughout the section, we work under four simplifying assumptions, whose full statement and scope are discussed in detail in \cref{sec:setting}: intuitively, they reduce the class of SAEs we studied to a realistic yet analytically tractable family. The most consequential requirement is orthogonality of the ground-truth concept directions (\cref{ass:orthogonal_features}), $\langle \mathbf v_i,\mathbf v_\ell\rangle=\delta_{i\ell}$: under orthogonality, observing $\mathbf x$ is equivalent to observing the set $S=\{\ell:c_\ell=1\}$ of active concepts (see \cref{eq:dgp}), so the law $p_{\mathbf x}$ reduces to the co-occurrence of concepts described by the probability $p_S$, which is the object our analysis will track. We also set $\mathbf b_{\mathrm{enc}}=\mathbf b_{\mathrm{dec}}=0$ (\cref{ass:zero_bias}) and, for fully monosemantic SAEs, take $W_{\mathrm{enc}}=W_{\mathrm{dec}}^\top$ (\cref{ass:tiedness}); the latter is without loss of generality under \cref{def:polysemanticity} (proven in \cref{prop:tied_wlog} in the appendix). Finally, we impose that encoder and decoder rows are zero/one combinations of true concepts (\cref{ass:aligned_atoms}), so that each SAE reconstructing $\mathbf x$ with $\mathbf x_\theta$ can be thought of as a map $S\mapsto \hat S_\theta$, where
\begin{equation}\label{eq:aligned_reconstruction_toy}
\hat{\mathbf x}_\theta=\sum_{\ell\in\hat S_\theta}\mathbf v_\ell.
\end{equation}
This last assumption is a genuine simplification rather than without loss of generality; it makes the problem of studying the tradeoff combinatorial and hence tractable. These assumptions are simplifications, not claims about real activations: their role is to isolate the response of an optimal SAE to the co-occurrence pmf $p_S$ from geometric and optimization complications. The conclusions we draw are about the principles co-occurrence imposes on optimal codes, not about the exact closed forms holding in real SAEs.

We now develop our three main claims in order. The next subsection establishes the trade-off qualitatively, as a structural property of \cref{def:rdpf} that requires none of the above assumptions; the full formal statement and proof are in \cref{sec:toy}.

\subsection{The RDP trade-off cannot be optimized away}\label{sec:toy_tradeoff}

Our first claim is qualitative: the three budgets in \cref{def:rdpf} cannot be jointly tightened for free. This follows from the definition of $R^\star(D_0,P_0)$ as an infimum over a nested family of feasible sets, and does not rely on any of the assumptions stated above.

\begin{theorem}[Monotonicity of the RDP frontier, informal]\label{thm:monotonicity_informal}
Fix a DGP and an SAE class $\Theta$. Then $R^\star(D_0,P_0)$ and its distortion dual $D^\star(R_0,P_0)$ are nonincreasing in $P_0$. Intuitively: if we want better monosemanticity,
\begin{enumerate}[(i)]
    \item but we are not willing to pay the price via worse reconstruction (distortion budget $D_0$), then we have to raise the required rate, i.e., use features less sparsely;
    \item but we are not willing to pay the price using more features for reconstruction (rate budget $R_0$), then we have to raise the distortion.
\end{enumerate}
\end{theorem}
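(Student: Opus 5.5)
The argument is a nested-feasible-sets monotonicity. It uses nothing about the DGP, the SAE class, or the specific functionals $R,D,P$ beyond their being real-valued (possibly $+\infty$) maps on $\Theta$. The distortion dual is not yet defined in the text, so I would first set
\begin{equation*}
D^\star(R_0,P_0):=\inf_{\theta\in\Theta}\bigl\{D(\theta):R(\theta)\le R_0,\;P(\theta)\le P_0\bigr\},
\end{equation*}
with the convention $\inf\emptyset=+\infty$. For budgets $(D_0,P_0)$ I would write $\mathcal F_R(D_0,P_0):=\{\theta\in\Theta: D(\theta)\le D_0,\,P(\theta)\le P_0\}$, and analogously $\mathcal F_D(R_0,P_0)$ for the dual.

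The key step is the inclusion: if $P_0\le P_0'$, then $\mathcal F_R(D_0,P_0)\subseteq \mathcal F_R(D_0,P_0')$. This holds because $P(\theta)\le P_0$ implies $P(\theta)\le P_0'$, while the $D$-constraint is unchanged. The infimum of a fixed function over a larger set is no larger, so $R^\star(D_0,P_0')\le R^\star(D_0,P_0)$. This remains valid in the degenerate cases: if the smaller set is empty, its infimum is $+\infty$ and the inequality is trivial. The same one-line argument, with $R$ and $D$ swapped, gives $D^\star(R_0,P_0')\le D^\star(R_0,P_0)$. By the identical inclusion in the other budget, $R^\star$ is also nonincreasing in $D_0$ and $D^\star$ in $R_0$.

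Next I would translate this monotonicity into the informal statements (i) and (ii).

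For (i), fix $D_0$ and tighten to $P_0<P_0'$. The minimal achievable rate weakly increases, $R^\star(D_0,P_0)\ge R^\star(D_0,P_0')$. So any SAE meeting the tighter polysemanticity budget at the same distortion needs rate at least $R^\star(D_0,P_0)$. Whenever the inequality is strict, this is a positive rate premium.

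For (ii), symmetrically, fix $R_0$; then $D^\star(R_0,P_0)\ge D^\star(R_0,P_0')$ gives the distortion premium. I would make explicit that the theorem asserts only weak monotonicity. Strictness, meaning an actual positive premium, is a DGP-dependent property deferred to the toy-model results in \cref{sec:toy_premium}, and is not claimed here.

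There is no real obstacle. The only care needed is bookkeeping: conventions for empty feasible sets and infinite values; noting that no attainment of the infimum is assumed; and noting that $P(\theta)$ for a general class may be defined via encoder, decoder, or both, which is irrelevant since only the constraint $P(\theta)\le P_0$ enters.
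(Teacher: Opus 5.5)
Your proof is correct and follows the paper's own argument: relaxing $P_0$ (or the other budget) enlarges the feasible set, and the infimum over a larger set can only be smaller. Your extra bookkeeping is consistent with the paper and only makes explicit what it leaves implicit: the explicit definition of $D^\star$, the convention $\inf\emptyset=+\infty$, and the remark that only weak monotonicity is claimed, with strict premiums deferred to the toy model.
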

The formal statement is presented in  \cref{thm:monotonicity} with a rigorous proof.

% The first monotonicity is the familiar rate-distortion trade-off. The two involving $P_0$ are the new content: they say that the polysemanticity axis cannot be added for free on top of rate and distortion.

\begin{callout}
Tightening monosemanticity is never a free lunch. An SAE that matches a polysemantic baseline on reconstruction while being strictly more monosemantic must pay for it on the rate axis, and vice versa. This is what \cref{fig:synthetic_rdp} reports empirically: in each panel, moving the third budget down lifts the Pareto frontier in the remaining two.
\end{callout}

The monotonicity is qualitative and tells us nothing about how big the premium is. For a narrow SAE on a DGP with heavy co-occurrence, it can be large; for other DGPs, it can be zero. Quantifying it requires more structure.

Next, we turn the qualitative monotonicity statement into a quantitative lower bound. In the main text, we study the minimum rate increase to achieve better monosemanticity. The companion distortion tax has a similar characterization.
% Under orthogonality and aligned atoms, the monosemantic class obeys a conservation law $D+R=\mathbb{E}\|\mathbf c\|_0$; this yields a closed-form rate tax that any monosemantic SAE matching a polysemantic baseline must pay. The companion distortion tax and the proofs are in \cref{sec:toy}.

\subsection{Quantifying the premium: the rate tax}\label{sec:toy_premium}

To quantify the premium, we use the closed-form reconstruction identity available under \cref{ass:orthogonal_features} and \cref{ass:aligned_atoms}: given a DGP with concepts $\mathbf c$ activating with probability $p_{\mathbf c}$, for any monosemantic SAE $\theta$ that can reconstructs a set $I\subseteq\{1,\dots,n\}$ of groundtruth concepts, \cref{prop:monosemantic_distortion} gives
\[
D(\theta)+R(\theta)=\mathbb{E}_{p_{\mathbf c}}\|\mathbf c\|_0.
\]
Distortion and rate are coupled by a conservation law: on the monosemantic class, every unit of rate reduces distortion by exactly one unit, and every concept omitted from the represented set costs its marginal probability in distortion. This has a concrete consequence for the rate tax that monosemanticity imposes. (We show the result under \cref{ass:orthogonal_features} and \ref{ass:aligned_atoms}.)

\begin{theorem}[Rate tax, informal]\label{thm:rate_tax_informal}
Fix a class $\Theta$ of TopK-SAEs with rate budget $K=k$. Let $D_\infty$ be the best distortion achievable at rate $\le k$ by an arbitrarily polysemantic SAE $\theta^*$. Assume $\theta^*$ is actually polysemantic (mixes concepts). Then, if there is a monosemantic SAE $\theta$ with distortion $\leq D_{\infty}$, it has a rate larger than $k$. More precisely:
\[
R(\theta)\;\ge\;\mathbb{E}_{p_{\mathbf c}}\|\mathbf c\|_0-\delta\;>\;k,
\]
where $\delta$ is the upper bound of the distortion that can be achieved by a monosemantic SAE that is still better than $D_\infty$.
\end{theorem}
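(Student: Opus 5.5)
The argument runs entirely through the conservation law of \cref{prop:monosemantic_distortion}. Any monosemantic SAE $\theta$ in the aligned-atoms class satisfies $D(\theta)+R(\theta)=\mathbb{E}_{p_{\mathbf c}}\|\mathbf c\|_0$. Hence a lower bound on the rate of a monosemantic SAE is the same thing as an upper bound on its distortion:
\[
R(\theta)=\mathbb{E}_{p_{\mathbf c}}\|\mathbf c\|_0-D(\theta).
\]
I will set $\delta:=D_\infty$, or more generally any upper bound on the distortion of a monosemantic SAE that matches the polysemantic baseline. The first inequality then follows at once: if $D(\theta)\le D_\infty\le\delta$, then $R(\theta)\ge\mathbb{E}\|\mathbf c\|_0-\delta$. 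This step is bookkeeping. I will state it for the monosemantic class without a TopK restriction, since the TopK budget is exactly what the monosemantic SAE must be allowed to exceed.

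The substantive step is the strict inequality $\mathbb{E}\|\mathbf c\|_0-\delta>k$, which is equivalent to $D_\infty<\mathbb{E}\|\mathbf c\|_0-k$. I will approach it in two parts.

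\emph{First}, I bound what the monosemantic class achieves at rate $\le k$. By the conservation law, any monosemantic SAE with $R(\theta)\le k$ has $D(\theta)\ge\mathbb{E}\|\mathbf c\|_0-k$. So $D^{\mathrm{mono}}_\infty(k):=\inf\{D(\theta):\theta\text{ monosemantic},\,R(\theta)\le k\}\ge\mathbb{E}\|\mathbf c\|_0-k$.

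\emph{Second}, I show that the hypothesis ``$\theta^*$ is actually polysemantic'' gives $D_\infty<D^{\mathrm{mono}}_\infty(k)$ strictly. I read the hypothesis as saying that the optimum at rate $\le k$ is attained only by polysemantic SAEs. I will use the compactness of the combinatorial aligned-atoms class (finitely many $0/1$ dictionaries), so that infima are minima. If the monosemantic infimum equalled $D_\infty$, it would be attained by a monosemantic minimizer, contradicting the hypothesis. Chaining the two parts gives $D_\infty<D^{\mathrm{mono}}_\infty(k)$, and combining with the first part, $D_\infty<\mathbb{E}\|\mathbf c\|_0-k$. Substituting $\delta=D_\infty$ yields $\mathbb{E}\|\mathbf c\|_0-\delta>k$.

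The main obstacle is this strictness step. It needs a precise formalization of ``actually polysemantic'' together with attainment of the infimum. I expect to make it concrete through the aligned-atoms reduction $S\mapsto\hat S_\theta$ of \cref{eq:aligned_reconstruction_toy}.

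The comparison rests on how each class spends its rate. A TopK polysemantic SAE spends one unit of rate per active atom, but each atom may cover a bundle of co-occurring concepts. It therefore removes more than one unit of distortion per unit rate whenever a bundle fires jointly with positive probability. A monosemantic SAE removes exactly one unit per unit rate.

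I would first try to exhibit directly, from the polysemantic optimizer, a concept pair with positive co-occurrence mass whose bundling strictly beats the conservation line. This would give a quantitative gap rather than just the strict inequality. If that proves messy, I will fall back to the attainment/contradiction argument above.
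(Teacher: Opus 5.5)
Your strictness step does not go through, and with your choice $\delta:=D_\infty$ the claimed strict inequality is actually false.

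\textbf{The invalid chaining.} Part one gives $D^{\mathrm{mono}}_\infty(k)\ge\mathbb{E}\|\mathbf c\|_0-k$, and Part two gives $D_\infty<D^{\mathrm{mono}}_\infty(k)$. Both are lower bounds on $D^{\mathrm{mono}}_\infty(k)$, so nothing follows about $D_\infty$ versus $\mathbb{E}\|\mathbf c\|_0-k$. Chaining them would require $D^{\mathrm{mono}}_\infty(k)\le\mathbb{E}\|\mathbf c\|_0-k$, i.e.\ a monosemantic code of rate exactly $k$. But monosemantic rates form a finite set (the staircase of \cref{thm:exact_monosemantic_frontier}), which need not contain $k$.

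\textbf{A counterexample.} Take $n=3$, $m=2$, $k=1$, with $p_{\{1,2,3\}}=0.21$, $p_{\{1\}}=p_{\{2\}}=0.315$, $p_\emptyset=0.16$, so $\mathbb{E}\|\mathbf c\|_0=1.26$.
\begin{itemize}
\item The atoms $\mathbf v_1+\mathbf v_2+\mathbf v_3$ (firing on $\{1,2,3\}$) and $\mathbf v_1$ (firing on $\{1\}$) give $D=0.315$ at rate $0.525$, with one active latent per sample.
\item Nothing at rate $\le 1$ does better. Going below $0.315$ needs zero error on both singletons, hence atoms $\mathbf v_1,\mathbf v_2$, i.e.\ a monosemantic code.
\item Monosemantic rates are sums of at most two $0.21$'s and two $0.315$'s, so they skip $[0.945,1]$. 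Hence monosemantic codes at rate $\le 1$ reach at best $D=0.42$.
\end{itemize}
So $D_\infty=0.315$ is attained only polysemantically, and the only monosemantic code with $D\le D_\infty$ has $D=0.21$, $R=1.05$. Yet $\mathbb{E}\|\mathbf c\|_0-D_\infty=0.945<k$.

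Your fallback fails on the same example. The polysemantic optimum beats the conservation line at its own rate $0.525$, not at $R=k$.

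\textbf{The fix.} Take $\delta$ as the statement intends: the attained maximum of $D$ over the finite, nonempty set $\Theta_M(D_\infty)$, not an arbitrary upper bound such as $D_\infty$. Then apply the hypothesis to the maximiser $\theta_\delta$ itself. By the conservation law, $R(\theta_\delta)=\mathbb{E}\|\mathbf c\|_0-\delta$. If this were $\le k$, then $\theta_\delta$ would be a monosemantic SAE with $R\le k$ and $D\le D_\infty$. That contradicts the assumption that no monosemantic code at rate $\le k$ reaches $D_\infty$, which your attainment argument correctly supplies.

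This is the paper's proof. Strictness is witnessed by an actual point on the monosemantic staircase, not by comparing $D_\infty$ with the conservation line at $R=k$. In the example it gives $\delta=0.21$ and $1.05>1$.
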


In \cref{sec:formal_theorems} we formalize the statement (\cref{thm:rate_tax}) and present a rigorous proof.

Let $D_{\infty}$ be the optimal distortion achieved without polysemanticity constraints yet under a rate budget. Vice versa, let $\delta\leq D_{\infty}$ be achieved requiring strict monosemanticity, but no rate constraint---$\delta$ is the worse degradation a monosemantic SAE can achieve while still being better than $D_{\infty}$. Then, the lower bound is a function of the DGP alone through \textit{(i)} the expected sparsity $\mathbb{E}_{p_{\mathbf c}}\|\mathbf c\|_0$ and \textit{(ii)} the attainable monosemantic distortion levels $\delta$. Concretely, the more mass $p_{\mathbf c}$ places on large co-occurring sets, the larger $\mathbb{E}_{p_{\mathbf c}}\|\mathbf c\|_0$, and the larger the rate tax imposed by monosemanticity.

\begin{callout}
Whenever the optimal SAE at a given rate is polysemantic, any monosemantic SAE that reconstructs at least as well must be strictly less efficient. The DGP determines by how much: the gap is tight to the co-occurrence structure encoded in $\mathbb{E}_{p_{\mathbf c}}\|\mathbf c\|_0$ and the monosemantic distortion levels $\delta$.
\end{callout}

The dual statement is that, at fixed rate, a monosemantic SAE incurs a strictly larger distortion. Both taxes share the same driver, the conservation law $D+R=\mathbb{E}\|\mathbf c\|_0$ on the monosemantic class, and together explain the staircase shape of the $R$ versus $D$ frontier at $P_0=0$ visible in the rightmost panel of \cref{fig:synthetic_rdp}: the feasible monosemantic $(R,D)$ pairs are a discrete set indexed by which concepts are omitted, so the frontier is a staircase rather than a line. The exact staircase is described by \cref{thm:exact_monosemantic_frontier}.

\begin{figure}[tbp]
    \centering
    \includegraphics[width=\linewidth]{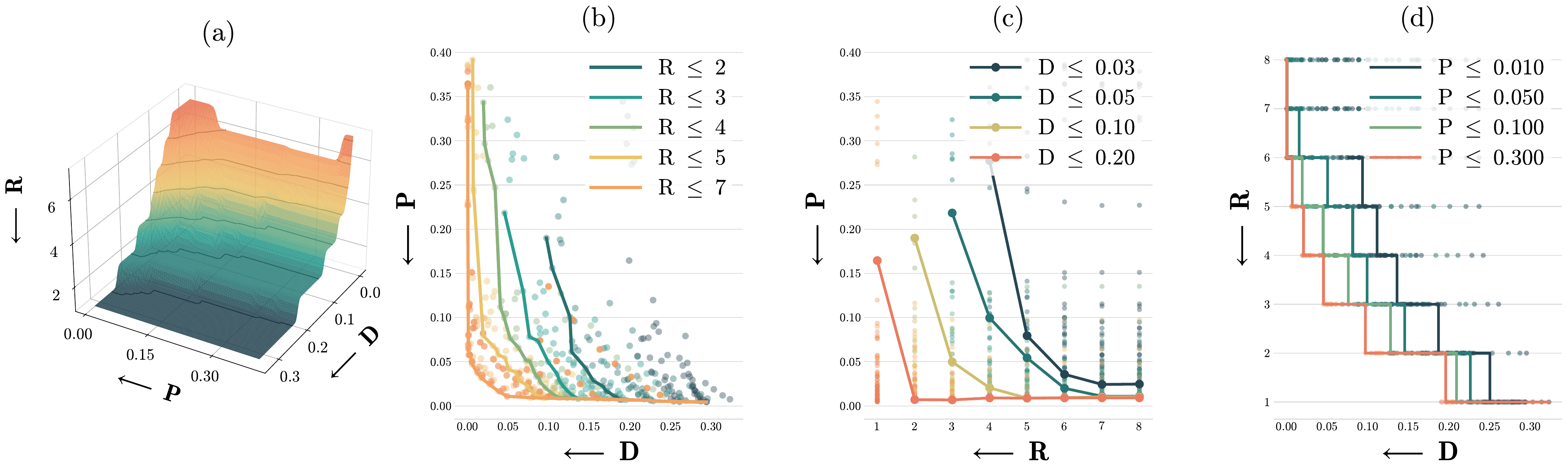}
    \caption{Empirical RDP frontier for TopK SAEs on a synthetic DGP ($n=20$ concepts in $\R^6$, $\sim 0.6$ features active per sample; sweep over rate $k\in\{1,\dots,8\}$, monosemanticity penalty $\lambda\in\{0,\dots,100\}$, $7$ seeds). \textbf{Left:} full $(P,D,R)$ surface. \textbf{Center-left.} $P$ vs $D$ Pareto front at rate bounds $k\le k_0$: with a tighter rate, monosemanticity ($P \downarrow$) is improved at the price of worse distortion. \textbf{Center-right.} $P$ vs $R$ frontiers at distortion budgets $D\le D_0$: monosemanticity ($P \downarrow$) is improved at the price of worse rate. \textbf{Right.} $R$ vs $D$ frontier at polysemanticity budgets $P\le P_0$: asking for better monosemanticity shifts the Pareto front to the right (worse rate \textit{and} distortion).}
    \label{fig:synthetic_rdp}
\end{figure}
% \textcolor{azure}{\textbf{FM}: Comments on \cref{fig:synthetic_rdp}.
% \begin{enumerate}
%     \item Font size both on the axis and on the legends. For the first plot, you can increase the font size and having a tick every 0.10, for instance (to avoid overcrowding the axes)
%     \item Need to make 4 subfigures, to make it easier to reference later. 
% \end{enumerate}
% }

The taxes tell us the premium is nonzero, but not why. The next subsection identifies the mechanism: writing the SAE loss in closed form as a function of $p_S$ (probability of co-occurring concepts) reveals optimal polysemanticity as a response to joint occurrence of concepts.

\subsection{Co-occurrence of concepts drives optimal polysemanticity}\label{sec:toy_cooccurrence}

The previous two subsections establish that a premium exists and bound it from below. They do not yet explain \emph{why} polysemanticity is the mechanism that a rate-constrained SAE uses to reduce distortion. The answer comes from writing the reconstruction loss in closed form as a function of the probability of co-occurrence of concepts in the DGP distribution.

\begin{lemma}[Closed-form SAE loss]\label{lem:loss} Let Assumptions \ref{ass:orthogonal_features}, \ref{ass:zero_bias}, \ref{ass:tiedness}, \ref{ass:aligned_atoms} satisfied. Then, the mean squared reconstruction error of a TopK SAE $\theta$ is:
\[
\mathcal L(\theta,K)=\mathbb{E}_{p_S}\bigl[\,|S|+|\hat S_\theta|-2\,|S\cap \hat S_\theta|\,\bigr].
\]
\end{lemma}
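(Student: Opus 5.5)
The plan is to reduce the squared error to a set-counting identity, using orthogonality of the concepts and the aligned-atoms representation of the reconstruction. By \cref{eq:dgp}, $\mathbf x=\sum_{\ell\in S}\mathbf v_\ell$. Under \cref{ass:zero_bias}, \cref{ass:tiedness} and \cref{ass:aligned_atoms}, the TopK SAE output takes the form $\hat{\mathbf x}_\theta=\sum_{\ell\in\hat S_\theta}\mathbf v_\ell$ of \cref{eq:aligned_reconstruction_toy}. Here $\hat S_\theta=\hat S_\theta(S)$ is a deterministic function of $S$. This holds because, by orthogonality (\cref{ass:orthogonal_features}), $\mathbf x$ determines and is determined by $S$, so the encoder activations, the TopK selection, and hence the decoded output all depend on $\mathbf x$ only through $S$.

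The first step is to justify that the decoded vector is indeed a $0/1$ combination of concepts indexed by a set, and not a multiset. Aligned atoms make each decoder row a $0/1$ combination $\sum_{\ell\in A_i}\mathbf v_\ell$. If the activations are unit on the selected latents, then $\hat{\mathbf x}_\theta=\sum_i z_i\sum_{\ell\in A_i}\mathbf v_\ell$. I would take the content of \cref{ass:aligned_atoms} (as stated around \cref{eq:aligned_reconstruction_toy}) to be precisely that this collapses to $\sum_{\ell\in\hat S_\theta}\mathbf v_\ell$. I therefore invoke that representation directly rather than re-deriving it.

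I expect this to be the main obstacle: making sure activation magnitudes and overlapping atom supports do not produce coefficients other than $0$ or $1$. If needed, I would note that the lemma's scope is exactly the regime in which \cref{eq:aligned_reconstruction_toy} holds.

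The second step is the computation itself. Write $\mathbf x-\hat{\mathbf x}_\theta=\sum_{\ell}(\mathbb 1[\ell\in S]-\mathbb 1[\ell\in\hat S_\theta])\mathbf v_\ell$. Orthonormality gives
\[
\|\mathbf x-\hat{\mathbf x}_\theta\|_2^2=\sum_\ell\bigl(\mathbb 1[\ell\in S]-\mathbb 1[\ell\in\hat S_\theta]\bigr)^2=|S\,\triangle\,\hat S_\theta|.
\]
Equivalently, expand $\|\mathbf x\|^2+\|\hat{\mathbf x}_\theta\|^2-2\langle\mathbf x,\hat{\mathbf x}_\theta\rangle$ and evaluate the three terms as $|S|$, $|\hat S_\theta|$ and $|S\cap\hat S_\theta|$. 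This yields $|S|+|\hat S_\theta|-2|S\cap\hat S_\theta|$ pointwise.

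Finally, take expectations. Since the law of $\mathbf x$ is the pushforward of $p_S$ under $S\mapsto\sum_{\ell\in S}\mathbf v_\ell$, the expectation over $\mathbf x$ equals $\mathbb E_{p_S}$, which gives $\mathcal L(\theta,K)$ as stated.
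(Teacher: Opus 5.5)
Your proposal is correct and follows essentially the route the paper takes. The paper writes out no separate proof of this lemma, but the same computation appears in the proof of Proposition~\ref{prop:monosemantic_distortion}: the residual is expanded coordinatewise in the orthonormal basis $\{\mathbf v_\ell\}$, and $\hat{\mathbf x}_\theta=\sum_{\ell\in\hat S_\theta}\mathbf v_\ell$ is taken directly from Assumption~\ref{ass:aligned_atoms}, exactly as you do. The obstacle you flag is discharged by that assumption as stated, and the zero-bias and tiedness assumptions play no role in the count $|S\,\triangle\,\hat S_\theta|=|S|+|\hat S_\theta|-2|S\cap\hat S_\theta|$.
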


The quantity inside the expectation is the symmetric difference $|S\,\triangle\,\hat S_\theta|$. Loss is minimized by choosing, for each source event $S$ weighted by $p_S$, a reconstruction $\hat S_\theta$ that overlaps $S$ as much as possible under the width and rate constraints. Because, under a narrow SAE assumption $K$, the number of active features, is smaller than the number of concepts in the DGP, no deterministic rule can match $\hat S_\theta=S$ on every event; the SAE must commit to a small family of events it can reconstruct and pay the symmetric-difference cost on the events it does not match.

\begin{callout}
An observation $\mathbf x(S)$ is an observation of a co-occurrence event $S$. The optimal SAE reconstructs the co-occurrence events with the largest $p_S$, and pays symmetric-difference distortion on the rest. The DGP, through $p_S$, directly picks which features the SAE will learn.
\end{callout}

This is the mechanism behind the premium of the previous subsection. When the DGP concentrates mass on events in which a pair of concepts $\mathbf v_i,\mathbf v_j$ fires jointly, a single polysemantic atom writing $\mathbf v_i+\mathbf v_j$ matches every such event at unit rate cost; two monosemantic atoms for the same pair cost two units of rate, or one unit of distortion if one is dropped. The inequality between these options is an inequality between probabilities in $p_S$.

\paragraph{A three-concept example.} To make this precise, we illustrate the case $n=3$, $m=2$, $K\in\{1,2\}$ in closed form (worked out in detail in \cref{sec:coccurrence}). A monosemantic SAE $\theta_M^{jk}$ represents two of the three concepts; hedged codes $\theta_H^{ij,k}$ replace one monosemantic atom dedicated to $\mathbf v_j$ with a joint atom writing $\mathbf v_i+\mathbf v_j$; feature-splitting codes $\theta_S^{ij,i}$ combine a joint atom with a fallback atom for one of its components. Applying \cref{lem:loss} we can compute the close form loss of any monosemanitic, hedged, and feature-splitting SAE: this  gives explicit inequalities in the probability mass function entries of $p_S$, which characterize when a polysemantic SAE beats its monosemantic competitor. For $K=2$, for instance,
\begin{equation*}
    \begin{split}
        &\mathcal L(\theta_M^{jk})-\mathcal L(\theta_H^{ij,k})>0\iff p_{ij}+p_{ijk}>p_j+p_{jk},\\
        &\mathcal L(\theta_M^{jk}) - \mathcal L(\theta_{S}^{ij, j}) >0 \iff  p_{ij} > p_k + p_{ik} + p_{jk}
    \end{split}
\end{equation*}
and analogous inequalities hold for $K=1$; the full set is in \cref{sec:coccurrence}.

\begin{callout}
A polysemantic atom writing $\mathbf v_i+\mathbf v_j$, whether by hedging or by splitting, is optimal exactly when the co-occurrence probabilities $p_{ij}$ and $p_{ijk}$ dominate the marginals of the concepts it would otherwise displace. Polysemanticity is an optimal response to joint occurrence, not a pathology of the optimizer.
\end{callout}

Two consequences are worth emphasizing. First, the driver is joint \emph{occurrence}, not correlation: the inequalities involve raw probabilities $p_{ij},p_{ijk}$, not centered quantities. A counterexample in \cref{sec:coccurrence} exhibits hedging under zero correlation and no hedging under strong correlation, so previous correlation-based accounts \citep{chanin2025hedging} are at best a proxy. Second, both hedging and feature splitting emerge in narrow SAEs under the same mechanism, rather than splitting being a wide-SAE-only phenomenon: both reduce symmetric-difference distortion under a rate cap, and the inequalities that certify them are of the same type.

The three-concept case is tight enough to enumerate all competitor codes in closed form; larger $n$ rapidly becomes combinatorial, and the general characterization of which joint atoms enter the optimum is the content of \cref{thm:exact_monosemantic_frontier} together with its polysemantic generalization sketched in \cref{sec:coccurrence}. The empirical sweep of \cref{fig:synthetic_rdp} corroborates the qualitative predictions at $n=20$: the $R$ versus $D$ frontier at $P_0=0$ is a staircase, tightening rate raises the monosemanticity floor (center-left panel), and the premium grows with the mass $p_S$ places on multi-concept events.

\begin{emptyroundbox}
\paragraph{Takeaway.}
Under the assumptions listed above, the RDP trade-off is a structural property of optimal SAEs, not an optimization artifact. Its size at a given rate is a function of the co-occurrence pmf $p_S$. Reducing polysemanticity without raising rate or distortion therefore requires changing the training distribution, not the architecture or the optimizer.
\end{emptyroundbox}

\section{Interpretability in The Wild: a Meta-Analysis of SAE's Metrics}\label{sec:real_world}
Our theory so far rests on a data-generating process, which is unrealistic in practical SAE use cases: the ground-truth concepts $V$ are unobservable, and $P(\theta)$ of \cref{def:polysemanticity} cannot be computed. As interpretability is the main reason practitioners train SAEs, the community has responded with \emph{proxies}, scalar functionals of an SAE intended to track interpretability without ground-truth concepts \citep{karvonen2025saebench,minegishi2025pseval}. These proxies routinely disagree, and there is no principled way to choose among them. We argue that the RDP theory supplies one: any genuine proxy for $P$ must respect the same tradeoff, a \emph{meta-metric consistency criterion}. % usable without ever accessing the true $\mathcal D$.

The criterion follows directly from \cref{thm:monotonicity}. If a candidate proxy $\hat P:\Theta\to\mathbb{R}_{\ge 0}$ were faithful to $P$, \cref{thm:monotonicity} would force the minimum of $\hat P$ under a joint rate-distortion budget to decrease as either budget is relaxed. We instantiate this criterion in the two following tests, which only require access to a finite sweep of $N$ SAEs, $\{(R_i,D_i,\hat P_i)\}_{i=1}^N$, rather than to the true DGP $\mathcal D$.
The first test is \textit{local}: it inspects every pair of SAEs in the sweep in which one has a tighter budget than the other, and checks whether the proxy orders them in the direction \cref{thm:monotonicity} predicts.
\begin{definition}[RDP monotonicity-violation rate]\label{def:violation_rate}
Call a pair $(i,j)$ \emph{dominated} when $R_i\le R_j$, $D_i\le D_j$, and $(R_i,D_i)\neq(R_j,D_j)$, namely when SAE $i$ sits under a tighter joint budget than $j$. Then $V(\hat P)$ is the fraction of dominated pairs on which the proxy is inverted: $V(\hat P) := |\mathcal D|^{-1}\sum_{(i,j)\in\mathcal D}\mathbf 1\{\hat P_i<\hat P_j\} \in [0,1].$
\end{definition}

A faithful proxy has $V=0$; $V=\tfrac12$ is the random baseline; $V>\tfrac12$ means the proxy systematically moves \emph{opposite} to the RDP-predicted direction. The second test is \textit{global}: rather than inspecting individual pairs, it measures whether proxy values decrease monotonically as rate and distortion budgets relax across the whole sweep.

\begin{definition}[RDP rank correlation]\label{def:rho}
The sign-flipped Spearman correlation between the joint budget rank and the proxy value: $\rho(\hat P) := -\mathrm{corr}_{\mathrm{Sp}}\bigl(\mathrm{rank}(R_i)+\mathrm{rank}(D_i),\,\hat P_i\bigr) \in [-1,1].$
\end{definition}

\begin{figure}[!htbp]
    \centering
    \includegraphics[width=\linewidth]{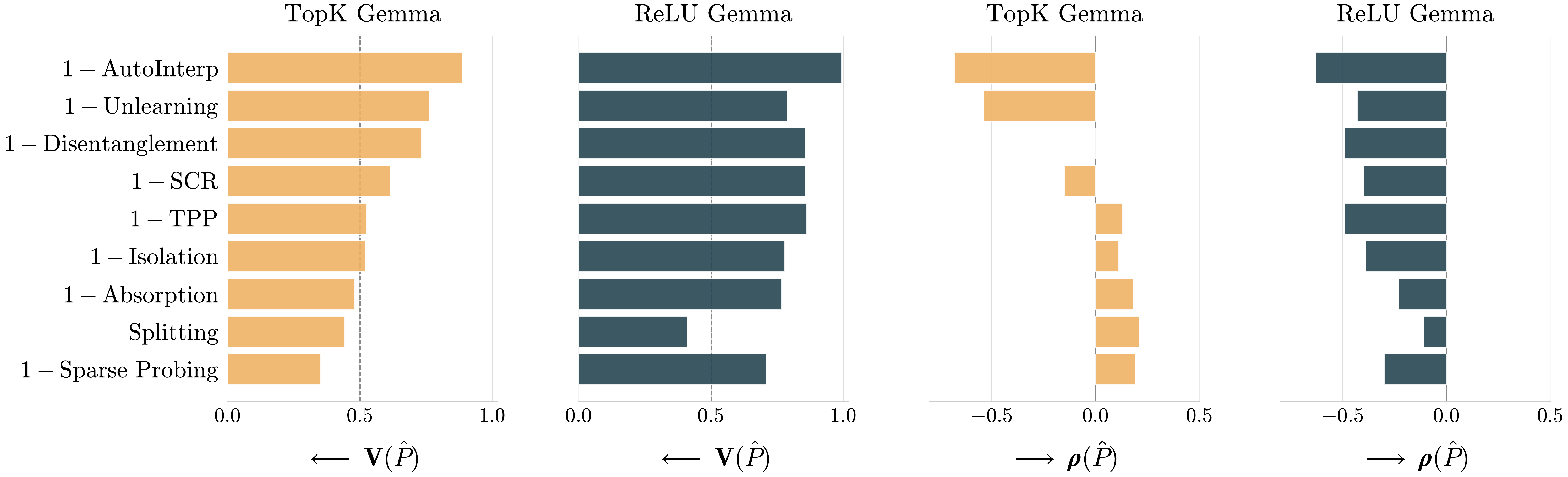}
    \caption{$V$ (left, lower is better; dashed line = random baseline $V=\tfrac12$) and $\rho$ (right, higher is better) for the nine SAEBench proxies on TopK and ReLU Gemma.}
    \label{fig:proxy_audit_bars}
\end{figure}

A faithful proxy has $\rho$ near $+1$; $\rho\approx 0$ signals no trend; $\rho<0$ signals global inversion. We report both because they diagnose different failure modes: $V$ catches pair-level reversals that a monotone trend can mask, $\rho$ catches global flatness that a few correctly-ordered pairs can mask. 

We apply the audit to TopK and ReLU SAEs on Gemma activations, evaluated under the SAEBench metrics of \citet{karvonen2025saebench}. The SAEs and their SAEBench scores are obtained directly from \citet{karvonen2025saebench} rather than retrained by us. \cref{fig:proxy_audit_bars} summarizes the results, which separate the metrics into three regimes on TopK Gemma. A first group is \emph{anti-correlated} with the RDP direction: AutoInterp ($V=0.89$, $\rho=-0.68$) and Unlearning ($V=0.76$, $\rho=-0.54$) respond to budget tightening in the wrong direction, and on ReLU Gemma the pattern extends to every SAEBench metric except Splitting. Strikingly, AutoInterp, by far the most expensive metric in the suite, relying on an LLM-as-judge, is the most severely misaligned. A second group, Isolation, TPP, and Absorption, is effectively \emph{uncorrelated}, hovering near the random baseline. Only Sparse Probing ($V=0.35$, $\rho=+0.19$) and Splitting ($V=0.44$, $\rho=+0.21$) show \emph{weak agreement} with the RDP direction, and neither strongly. No widely-adopted SAE metric clearly tracks the RDP-predicted direction, suggesting that measuring polysemanticity in real-world SAEs may require metrics built on semantically controlled data rather than on downstream probes or LLM-based judgments. Full numbers and per-proxy scatter plots are provided in the appendix (\cref{fig:gemma_rd_by_p}, \cref{fig:relu_gemma_rd_by_p}, \cref{tab:proxy_audit}).

\begin{emptyroundbox}
\paragraph{Takeaway.}
In practice, proxy metrics for polysemanticity do not faithfully track the RDP trade-off, making progress difficult to quantitatively assess. Surprisingly, expressive metrics like AutoInterp are most misaligned, suggesting they may be unreliable measures of polysemanticity. We speculate that semantically controlling the training data may be a viable way forward to quantify improvements in interpretability.  
\end{emptyroundbox}

\section{Conclusion}
% We show that without explicit accounting for monosemanticity of the features, SAE trained to sparsely reconstruct features are by necessity polysemantic. 
\looseness-1Under toy model assumptions, we demonstrate that polysemanticity is a response of the SAE to exploit co-occurrence of concepts in the training distribution in order to minimize the reconstruction error under sparsity constraints. Given an appropriate measure of polysemanticity, we qualitatively and quantitatively characterize this Rate-Distortion-Polysemanticity tradeoff.  Finally, we argue that in real-world settings, when polysemanticity is tracked without ground-truth concepts, a good proxy must exhibit the tradeoff. This gives a necessary condition that we use to evaluate widely adopted measures of polysemanticity, which we show to be mostly insufficient in this respect. Our takeaway is that, for the goals of mechanistic interpretability, reducing polysemanticity requires synergy between architectures, optimization strategies, and data, the latter being underexplored until now. %aware of the risk that co-occurrences of concepts in the training distribution impose.

\section{Acknowledgments}
FM is funded by the Chan Zuckerberg Initiative. TM is financed from the project “Building Energy Systems on causal reasoning (BOSS)“, funded within the “Technologies and Innovations for the Climate-Neutral City” (TIKS) Programme of the Austrian Research Promotion Agency (FFG).

\bibliographystyle{plainnat}
\bibliography{biblio}

\newpage
\appendix
\section{Notation}\label{app:notation}

\begin{table}[h]
\centering
\small
\begin{tabular}{@{}ll@{}}
\toprule
\textbf{Symbol} & \textbf{Meaning} \\
\midrule
\multicolumn{2}{@{}l}{\textit{Dimensions}} \\
$d$ & dimension of the ambient activation space \\
$m$ & SAE width (number of latent coordinates) \\
$n$ & number of ground-truth concepts in the DGP \\
$K$ & TopK sparsity budget, when applicable \\
\midrule
\multicolumn{2}{@{}l}{\textit{Sparse autoencoder}} \\
$\mathbf x \in \mathbb{R}^d$ & input activation \\
$\hat{\mathbf x} \in \mathbb{R}^d$ & SAE reconstruction of $\mathbf x$ \\
$\mathbf a(\mathbf x) \in \mathbb{R}^m$ & preactivation \\
$\mathbf z(\mathbf x) \in \mathbb{R}^m$ & latent code (message) \\
$W_{\mathrm{enc}}, W_{\mathrm{dec}} \in \mathbb{R}^{m\times d}$ & encoder / decoder weight matrices \\
$\mathbf b_{\mathrm{enc}} \in \mathbb{R}^m,\ \mathbf b_{\mathrm{dec}} \in \mathbb{R}^d$ & encoder / decoder biases \\
$\sigma$ & nonlinear sparsifying function, e.g.\ ReLU or TopK \\
$\theta = (W_{\mathrm{enc}}, W_{\mathrm{dec}}, \mathbf b_{\mathrm{enc}}, \mathbf b_{\mathrm{dec}})$ & parameter tuple of an SAE \\
$\Theta$ & class of SAEs under consideration \\
\midrule
\multicolumn{2}{@{}l}{\textit{Data generating process}} \\
$V = (\mathbf v_1,\ldots,\mathbf v_n) \in \mathbb{R}^{d\times n}$ & matrix of ground-truth concept directions \\
$\mathbf v_\ell \in \mathbb{R}^d$ & $\ell$-th ground-truth concept direction \\
$\mathbf c \in \{0,1\}^n$ & binary indicator vector of active concepts \\
$c_\ell \in \{0,1\}$ & indicator that concept $\ell$ is active \\
$p_{\mathbf c}$ & joint pmf of the concept vector $\mathbf c$ \\
$S = \{\ell : c_\ell = 1\}$ & index set of active concepts for a sample \\
$p_S$ & pmf of co-occurring concept events \\
$\hat S_\theta \subseteq \{1,\ldots,n\}$ & set of concepts reconstructed by the SAE \\
\midrule
\multicolumn{2}{@{}l}{\textit{Rate, distortion, polysemanticity}} \\
$I(\mathbf x;\hat{\mathbf x})$ & mutual information between source and reconstruction \\
$d(\cdot,\cdot)$ & distortion measure in the classical Shannon setup \\
$R(\theta) = \mathbb{E}[\|\mathbf z_\theta(\mathbf x)\|_0]$ & rate of an SAE (expected $\ell_0$ of the code) \\
$D(\theta) = \mathbb{E}[\|\mathbf x - \hat{\mathbf x}_\theta\|_2^2]$ & distortion of an SAE (MSE) \\
$R^\star(D_0)$ & rate-distortion function of the class $\Theta$ \\
$C_{\mathrm{enc}} = \cos(W_{\mathrm{enc}} V^\top)$ & encoder / ground-truth cosine-similarity table \\
$C_{\mathrm{dec}} = \cos(W_{\mathrm{dec}} V^\top)$ & decoder / ground-truth cosine-similarity table \\
$P(C)$ & row-wise max-alignment polysemanticity of a cosine table \\
$P_{\mathrm{joint}}(\theta) = P(C_{\mathrm{enc}}) + P(C_{\mathrm{dec}})$ & joint encoder-decoder polysemanticity \\
$R^\star(D_0, P_0)$ & rate-distortion-polysemanticity (RDP) function \\
$D^\star(R_0, P_0)$ & distortion dual of the RDP function \\
\bottomrule
\end{tabular}
\caption{Notation used throughout the paper.}
\label{tab:notation}
\end{table}

%%%%%%%%%%%%%%%%%%%%%%%%%%%%%%%%%%%%%
%%%%%%%%%%%%%%%%%%%%%%%%%%%%%%%%%%%%%

\section{Limitations}\label{app:limitations}
A key limitation of our analysis is that the strongest results are obtained under a stylized toy model: orthogonal ground-truth concepts, binary/aligned atoms, zero biases, and other tractability assumptions that isolate co-occurrence effects from geometry and optimization. Accordingly, our theorems identify principles governing optimal codes under a known DGP, not exact characterizations of real SAEs trained on neural activations. In realistic settings, the ground-truth concepts and co-occurrence law are unobserved, so polysemanticity cannot be measured directly and must be inferred through proxies. This means our metric analysis delivers necessary consistency conditions for such proxies, but not a sufficient validation that any given proxy truly captures polysemanticity in the wild. More broadly, our empirical audit should be read as evidence of misalignment in current evaluations, rather than as a definitive measurement of interpretability itself. Future work should relax the toy assumptions, test the theory under richer generative structures, and develop benchmarks with semantically controlled data that permit stronger empirical validation.

%%%%%%%%%%%%%%%%%%%%%%%%%%%%%%%%%%%%%
%%%%%%%%%%%%%%%%%%%%%%%%%%%%%%%%%%%%%

\section{Related Works}\label{relworks}

\paragraph{Rate-Distortion Theory.} Our work is grounded in rate-distortion theory, originally developed in information theory and later revisited in machine learning. Classical rate-distortion theory defines the minimum price (rate) one must pay to represent a signal within a prescribed distortion budget \citep{Cover, Stone, shannon1959}. In machine learning, this framework has been studied extensively through autoencoders, most prominently variational autoencoders, where the \emph{rate} measures how much information the latent $z$ preserves about the input $x$, and the \emph{distortion} measures reconstruction error \citep{alemi2018brokenelbo, burgerVAE}. However, \citet{tschannen2018autoencoders,blau2019rdp} show that this two-axis view is incomplete whenever an additional qualitative property, such as perceptual quality in image compression or representation usefulness in downstream tasks. The same limitation of the rate-distortion theory reappears in mechanistic interpretability: \citet{ayonrinde2024mdl} and \citet{chanin2025sparsewrong} argue that the sparsity-reconstruction trade-off, used to describe SAEs \citep{gao2024scaling, rajamanoharan2024jump}, is insufficient, since it fails to capture interpretability and would judge monosemantic SAEs as inferior, yet what exactly is missing remains unspecified. Our work formalizes polysemanticity as that missing axis and develops a precise rate-distortion-polysemanticity theory for SAEs.

\paragraph{Polysemanticity, Superposition and LRH in neural networks.} The second core theme of our paper is \textit{polysemanticity} \citep{olah2020}. A standard explanation for why neurons turn polysemantic, responding to multiple distinct concepts, is \emph{superposition}: when many sparse features must be packed into a limited-dimensional activation space, they share directions, so mixed neurons emerge from capacity pressure rather than optimization accident \citep{elhage2022superposition,scherlis2022capacity}, with monosemantic neurons reserved for the most important concepts. Both notions should be distinguished from the \textit{Linear Representation Hypothesis} (LRH), which posits that concepts are encoded as approximately linear directions in activation space, without itself implying monosemanticity or superposition \citep{park2024lrh}. Whereas prior work treats polysemanticity in LMs as a qualitative phenomenon, we turn it into a measurable quantity by defining an exact metric for its instantiation in SAEs in light of the data generating process.

\paragraph{Structural Analysis of Sparse Autoencoders.}
Sparse autoencoders were introduced precisely in response to the polysemantic frame described above: if concepts are represented as linear directions, but individual neurons are polysemantic, then an overcomplete learned dictionary may recover a more monosemantic feature basis \citep{bricken2023monosemanticity,cunningham2023saes}. A growing body of work, however, has shown that SAEs do not reliably achieve this ideal. \citet{chanin2024absorption} and \citet{ayonrinde2024mdl} show that SAEs can improve sparsity by turning otherwise monosemantic features into structurally degenerate ones, through mechanisms such as feature absorption and splitting, and \citet{chanin2025hedging} show that similar non-monosemantic solutions can also be favored to reduce distortion, not only rate (feature hedging). A parallel line of work aims to fix these failures through architectural or training-procedure modifications \citep{gao2024scaling, bussmann2024batchtopksparseautoencoders,bussmann2025learningmultilevelfeaturesmatryoshka,rajamanoharan2024gated,rajamanoharan2024jump}, but such interventions treat polysemanticity as an optimization byproduct to be engineered away, not as a more fundamental data problem. Our work is instead the first to give \textit{a systematic account of the joint relation between rate, distortion, and polysemanticity by treating polysemanticity as a measurable property} of the SAE hypothesis space. This is a genuinely distinct axis: the closest existing measurements, such as MCC in \citet{oneill2025computeoptimalinferenceprovable} and \citet{chanin2026synthsaebench}, evaluate the overall quality of the recovered dictionary rather than polysemanticity itself, an SAE could be perfectly monosemantic yet highly redundant, with many atoms aligned to the same concept direction, and still receive a poor MCC score. In isolating polysemanticity from dictionary quality, we also take a different stance from prior work: \textit{we characterize polysemanticity as a data problem, not only an optimization one.}

%%%%%%%%%%%%%%%%%%%%%%%%%%%%%%%%%%%%%

% \section{Full proxy audit}\label{app:audit}

% This appendix reports the full proxy audit underlying \cref{fig:proxy_audit_bars}: the numeric summary of all ten proxies in \cref{tab:proxy_audit}, and the rate--distortion scatter plots colored by each proxy in \cref{fig:gemma_rd_by_p,fig:relu_gemma_rd_by_p,fig:gpt_specificity}. The necessary condition of \cref{thm:proxy-monotonicity} predicts that, as either the rate or the distortion budget relaxes, the empirical envelope of each proxy should weakly improve; deviations from this pattern are violations of the necessary condition.

\begin{table}[!htbp]
    \centering
    \small
    \begin{tabular}{@{}l cc cc@{}}
        \toprule
        \multirow{2}{*}{\textbf{Proxy} $\hat P$}
           & \multicolumn{2}{c}{\texttt{TopK Gemma}}
           & \multicolumn{2}{c}{\texttt{ReLU Gemma}} \\
        \cmidrule(lr){2-3} \cmidrule(lr){4-5}
           & $V(\hat P)\,\downarrow$ & $\rho(\hat P)\,\uparrow$
           & $V(\hat P)\,\downarrow$ & $\rho(\hat P)\,\uparrow$ \\
        \midrule
        AutoInterp      & 0.886 & $-0.68$ & 0.993 & $-0.63$ \\
        Unlearning      & 0.762 & $-0.54$ & 0.788 & $-0.43$ \\
        Disentanglement & 0.733 & $-0.00$ & 0.857 & $-0.49$ \\
        SCR             & 0.614 & $-0.15$ & 0.855 & $-0.40$ \\
        TPP             & 0.525 & $+0.13$ & 0.862 & $-0.49$ \\
        Isolation       & 0.520 & $+0.11$ & 0.778 & $-0.39$ \\
        Absorption      & 0.480 & $+0.18$ & 0.766 & $-0.23$ \\
        Splitting       & {0.441} & $\mathbf{+0.21}$ & \textbf{0.411} & $\textbf{-0.11}$ \\
        Sparse Probing  & \textbf{0.351} & ${+0.19}$ & 0.709 & $-0.30$ \\
        \bottomrule
    \end{tabular}
    \caption{Audit of SAEBench polysemanticity proxies \citep{karvonen2025saebench} against the RDP monotonicity criterion. $V(\hat P)$ is the envelope-violation rate (lower is better; $V=\tfrac12$ is the random baseline, $V>\tfrac12$ is anti-correlated with the RDP-predicted direction); $\rho(\hat P)$ is the sign-flipped Spearman rank correlation between the joint rate--distortion budget rank and the proxy value (higher is better). Rows are sorted by $V$ on TopK Gemma (worst at top). Only Splitting and Sparse Probing satisfy $V<\tfrac12$ and $\rho>0$ on TopK Gemma.}
    \label{tab:proxy_audit}
\end{table}

\begin{figure}[!htbp]
    \centering
    \includegraphics[width=\linewidth]{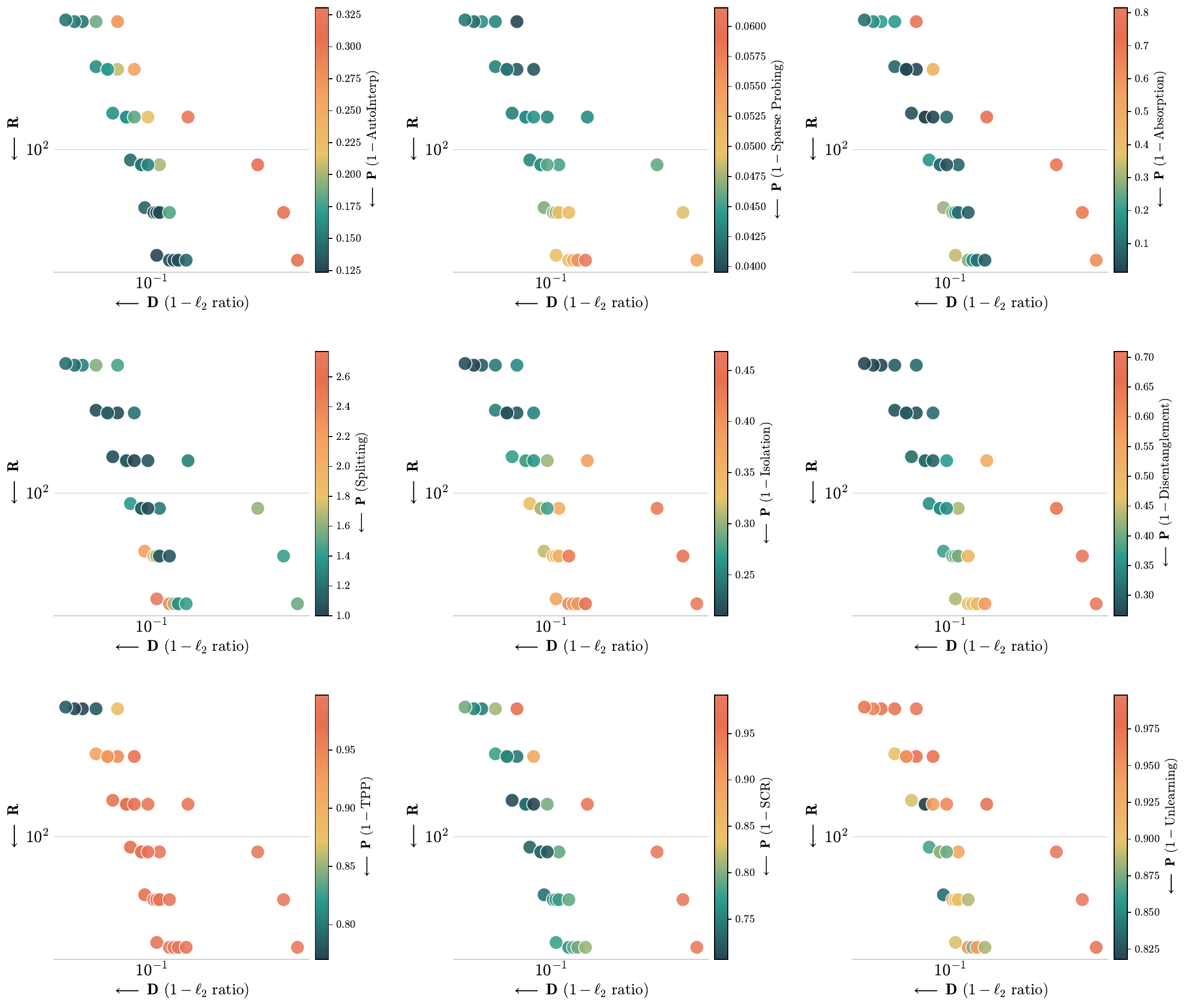}
    \caption{Rate--distortion scatter for TopK SAEs trained on Gemma activations, with each panel colored by one of the nine SAEBench polysemanticity proxies of \citet{karvonen2025saebench}. }
    \label{fig:gemma_rd_by_p}
\end{figure}

\begin{figure}[!htbp]
    \centering
    \includegraphics[width=\linewidth]{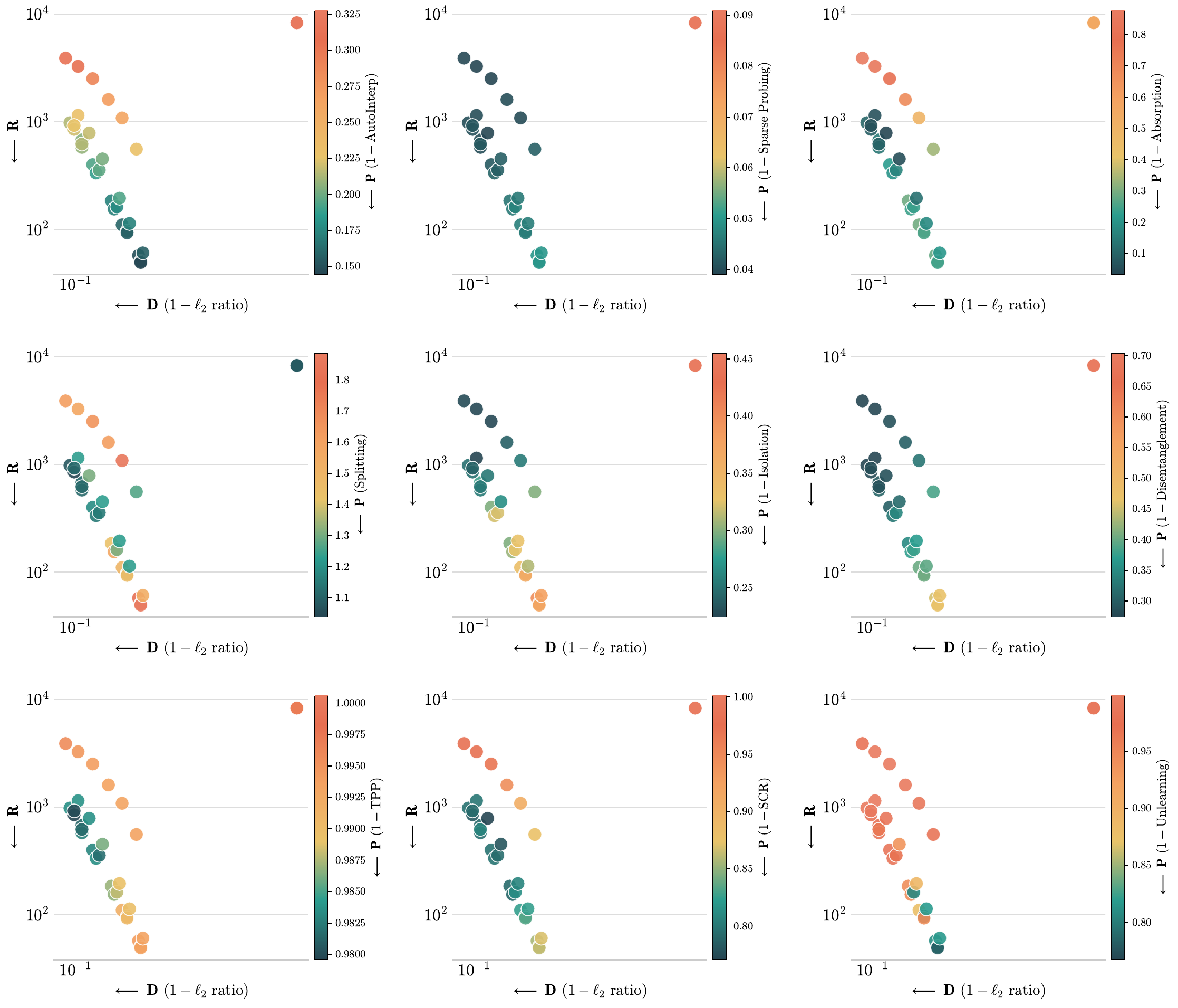}
    \caption{Rate--distortion scatter for ReLU SAEs trained on Gemma activations, colored by each of the nine SAEBench polysemanticity proxies.}
    \label{fig:relu_gemma_rd_by_p}
\end{figure}

\section{Computational resources}\label{app:compute}
The synthetic experiments (Figure 4) were run on a single NVIDIA L40S GPU (48 GB GDDR6, Ada Lovelace). The real-world analysis (Section 4) applies deterministic statistics to pre-computed SAEBench outputs and requires no training; its compute cost is negligible.

\section{A toy model of Rate-Distortion-Polysemanticity tradeoff phenomena: formal treatment}\label{sec:toy}

\subsection{Setting}\label{sec:setting}
As we aim to study the RDP tradeoff from a theoretical perspective, we state some simplifying assumptions on thedata-generatingg process and the SAE architecture that make the problem approachable in closed analytical form. In the text, we specify when each of these assumptions is used, rather then imposing them in general.

\begin{restatable}[Orthogonal true factors]{assumption}{AssOrthogonal}\label{ass:orthogonal_features}
    We assume
\[
\langle v_i,v_\ell\rangle=\delta_{i\ell}.
\]    
\end{restatable}

% Under this orthogonality assumption, the map from $\mathbf x \mapsto S$ is injective, and so observing $\mathbf x = \sum_{\ell =1}^n c_\ell \mathbf v_\ell =  \sum_{\ell \in S} \mathbf v_\ell$ is equivalent to observe the density $p_S$ of the random variable $S = \set{\ell |  c_\ell=1, \ell=1,...,n}$ of co-occurring concepts. Hence, from now on we will reason in terms of $p_S$ rather than $p_{\mathbf x}$, to explain how the parametrization of an SAE captures information about co-occurrence of concepts in $\mathbf x$ samples. 
Under this orthogonality assumption, the map \(S \mapsto \mathbf x(S)=\sum_{\ell\in S} v_\ell\) is injective. Indeed,
\[
\mathbf{1}\{\ell\in S\}=\langle \x(S), v_\ell\rangle.
\]
Hence observing \(\x\) is equivalent to observing the source event \(S\), and the law $p_\x$ may be studied directly through the pmf \(p_S\) of co-occurring concepts: from now on we will reason in terms of $p_S$ to explain how the parametrization of an SAE is driven by the co-occurrence of concepts in $\mathbf x$ samples. 

\begin{restatable}[Zero bias terms]{assumption}{AssZeroBias}\label{ass:zero_bias} We assume $\mathbf b_{\mathrm{enc}} = \mathbf b_{\mathrm{dec}} = 0$.
\end{restatable}

\begin{restatable}[Tied monosemantic SAE]{assumption}{AssTiedness}\label{ass:tiedness}
For monosemantic SAEs, we assume \textit{tied} encoders and decoders, i.e.
\begin{equation}\label{eq:tiedness}
    W_{\textnormal{enc}} = W_{\textnormal{dec}}^T.
\end{equation}
\end{restatable}
Under our definition of polysemanticity of \cref{eq:polysemanticity}, this assumption is not restrictive: in \cref{app:tiedness_wlog} we show that monosemanticity is in fact equivalent to tiedness in the sense of \cref{eq:tiedness}.
This is unrealistic in real networks: concepts are not binary, features are noisy, and directions
need not be orthogonal. We keep the assumption because it isolates the role of the joint probability mass function from geometric complications.

\begin{restatable}[Aligned and binary atoms]{assumption}{AssAlignedAtoms}\label{ass:aligned_atoms}
     The encoder $W_{\textnormal{enc}}$ and decoder $W_\textnormal{dec}$ rows are linear combinations of true concepts $\mathbf v_\ell$  with unit weights. So, for each observation 
     $$
        \mathbf x = \sum_{\ell \in S} \mathbf v_\ell,
     $$
     where $S$ is the index of the groundtruth active concepts, the SAE defines a map $S \mapsto \hat S_\theta \subset \set{1,...,n}$ and the reconstructed observation is 
     \begin{equation}\label{eq:aligned_reconstruction}
         \hat{\mathbf x}_{\theta} = \sum_{\ell \in \hat S_{\theta}} \mathbf v_\ell.
     \end{equation}
\end{restatable}
This is more restrictive than a general SAE. However, note that without loss of generality we can already assume that each reconstruction has the form:
\begin{equation}
    \hat{\mathbf x}_{\theta} = \sum_{\ell \in \hat S_\theta} \hat \alpha_{\theta,\ell} \mathbf v_\ell, \quad \hat \alpha_{\theta,\ell} \in \R.
\end{equation}

Indeed, under \cref{ass:orthogonal_features}, every reconstruction admits a unique orthogonal decomposition
\[
\hat \x_\theta=\sum_{\ell=1}^n \hat\alpha_{\theta,\ell} \mathbf v_\ell + \mathbf r_\perp,
\qquad
\mathbf r_\perp \perp \mathrm{span}\{\mathbf v_1,\dots,\mathbf v_n\}.
\]
Since each ground-truth sample satisfies \(\x \in \mathrm{span}\{\mathbf v_1,\dots,\mathbf v_n\}\), we have
\[
\|\x-\hat \x_\theta\|_2^2
=
\left\|
\x-\sum_{\ell=1}^n \hat\alpha_{\theta,\ell} \mathbf v_\ell
\right\|_2^2
+
\|\mathbf r_\perp\|_2^2.
\]
Hence the orthogonal residual \(\mathbf r_\perp\) can only increase distortion. Therefore, for
distortion-minimization, one may restrict without loss of generality to reconstructions lying in
\(\mathrm{span}\{\mathbf v_1,\dots,\mathbf v_n\}\). \cref{ass:aligned_atoms} only imposes the further simplifying restriction
\(\hat\alpha_{\theta,\ell}\in\{0,1\}\).

\paragraph{Summary.} Assumptions \cref{ass:orthogonal_features} and \cref{ass:zero_bias} are the main simplifying assumptions. \cref{ass:tiedness} is without loss of generality, and made explicit only for clarity.  \cref{ass:aligned_atoms} is an additional tractability assumption that simplifies the theoretical analysis; the main qualitative conclusions extend beyond it.

\subsection{A toy model of the rate-distortion-polysemanticity tradeoff}\label{sec:formal_theorems}
The goal of this section is to theoretically and empirically demonstrate that \textit{(i)} there is a tension between efficiency, reconstruction accuracy, and interpretability, meaning that they can not be jointly optimized. We call this the rate-distortion-polysemanticity tradeoff; \textit{(ii)} the RDP tradeoff is not an optimization or architectural artifact: polysemanticity emerges to exploit (reducing distortion and rate) the properties of the DGP, namely co-occurrences of features that can not be compactly captured by monosemantic features.

Firstly, we characterize the general property of the rate-distortion-polysemanticity trade-off. The next result is completely general and does not use Assumptions \ref{ass:orthogonal_features}-\ref{ass:aligned_atoms}.
\begin{theorem}[Basic monotonicity of the RDP frontier, formalization of \cref{thm:monotonicity_informal}]\label{thm:monotonicity}
Fix a data-generating process $p_\mathbf c$ and a class $\Theta$ of SAEs.
Then:
\begin{enumerate}[(i)]
    \item for every fixed $P_0$, the map $D_0 \mapsto R_p^\star(D_0,P_0)$ is nonincreasing;
    \item for every fixed $D_0$, the map $P_0 \mapsto R_p^\star(D_0,P_0)$ is nonincreasing;
    \item for every fixed $R_0$, the map $P_0 \mapsto D_p^\star(R_0,P_0)$ is nonincreasing.
\end{enumerate}
In particular, for every $D_0$ and $R_0$,
\[
R_p^\star(D_0,\infty)\le R_p^\star(D_0,0),
\qquad
D_p^\star(R_0,\infty)\le D_p^\star(R_0,0).
\]
\end{theorem}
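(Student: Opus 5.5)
The whole theorem reduces to one observation: each frontier is an infimum over a feasible set, and relaxing any budget can only enlarge that set. I would therefore start by fixing notation. For budgets $(D_0,P_0)$ let
\[
\mathcal F_R(D_0,P_0):=\{\theta\in\Theta: D(\theta)\le D_0,\ P(\theta)\le P_0\},
\]
so that $R_p^\star(D_0,P_0)=\inf_{\theta\in\mathcal F_R(D_0,P_0)}R(\theta)$, as in \cref{def:rdpf}. Symmetrically, the distortion dual is
\[
D_p^\star(R_0,P_0):=\inf\{D(\theta):\theta\in\Theta,\ R(\theta)\le R_0,\ P(\theta)\le P_0\}.
\]
Adopt the convention $\inf\emptyset=+\infty$, so both functions are well defined with values in $[0,+\infty]$. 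Nothing about the DGP or the SAE architecture is used, consistent with the claim that the result needs none of Assumptions \ref{ass:orthogonal_features}--\ref{ass:aligned_atoms}.

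The key step is set inclusion. If $D_0\le D_0'$, then every $\theta$ with $D(\theta)\le D_0$ also satisfies $D(\theta)\le D_0'$, so $\mathcal F_R(D_0,P_0)\subseteq\mathcal F_R(D_0',P_0)$. Likewise, $P_0\le P_0'$ gives $\mathcal F_R(D_0,P_0)\subseteq\mathcal F_R(D_0,P_0')$. The same reasoning applies to the feasible set of the distortion dual when $P_0$ increases.

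Next I would invoke the elementary fact that $A\subseteq B$ implies $\inf_B f\le\inf_A f$ for any $f:\Theta\to[0,\infty]$. This includes the case $A=\emptyset$, where the left side is at most $+\infty$ trivially. Applying it with $f=R$ gives (i) and (ii), and with $f=D$ gives (iii).

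For the "in particular" statement, take $P_0=0$ against $P_0=\infty$ in (ii) and (iii). Here $R_p^\star(D_0,\infty)$ is read as the unconstrained-$P$ frontier, i.e.\ the $R^\star(D_0)$ of \cref{eq:rdf}. Since $P\in[0,1]$ by \cref{def:polysemanticity}, the constraint $P(\theta)\le\infty$ is vacuous, and this is consistent with the monotone limit.

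There is no real obstacle here; the only care needed is bookkeeping. First, the infima must be allowed to equal $+\infty$ when a budget, such as $P_0=0$ combined with a small $D_0$, admits no feasible SAE. Second, if one prefers to define $P(\theta)$ through both the encoder and decoder (e.g.\ $P_{\mathrm{joint}}$), the argument is unchanged, because it uses only that $P$ is some fixed functional on $\Theta$.
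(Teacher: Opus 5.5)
Your proposal is correct and follows the paper's proof: both rest on the feasible sets being nested as a budget is relaxed, so the infimum over the larger set can only be smaller, and both obtain the final inequalities by comparing $P_0=0$ with $P_0=\infty$. The conventions you make explicit ($\inf\emptyset=+\infty$, and reading $P_0=\infty$ as a vacuous constraint) are harmless additions that the paper leaves implicit.
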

\paragraph{Intuition.} The first part, is an instance of the \textit{usual} RD tradeoff common in information theory. The second and third points are novel. Part (ii) shows that, fixed the distortion, better monosemeanticity can be achieved only at the expense of the rate. Complementary to this, part (iii) demonstrates that at a fixed rate $R$, monosemanticity can only be achieved degrading the reconstruction. These results support the qualitative claim that optimizing reconstruction under rate constraints does not lead to interpretable SAE features. 
\begin{callout}
   The RDP tradeoff implies that, by necessity, better monosemanticity must enforce worse reconstruction or efficiency of the code.
\end{callout}

\begin{proof}
For fixed $P_0$, if $D_1\le D_2$, then
\[
\{\theta\in\Theta : D(\theta)\le D_1,\; P(\theta)\le P_0\}
\subseteq
\{\theta\in\Theta : D(\theta)\le D_2,\; P(\theta)\le P_0\}.
\]
Taking the infimum of $R(\theta)$ over the larger feasible set can only decrease the value, so
\[
R_p^\star(D_2,P_0)\le R_p^\star(D_1,P_0).
\]
This proves (i).

Similarly, for fixed $D_0$, if $P_1\le P_2$, then
\[
\{\theta\in\Theta : D(\theta)\le D_0,\; P(\theta)\le P_1\}
\subseteq
\{\theta\in\Theta : D(\theta)\le D_0,\; P(\theta)\le P_2\},
\]
hence
\[
R_p^\star(D_0,P_2)\le R_p^\star(D_0,P_1).
\]
This proves (ii).

Finally, for fixed $R_0$, if $P_1\le P_2$, then
\[
\{\theta\in\Theta : R(\theta)\le R_0,\; P(\theta)\le P_1\}
\subseteq
\{\theta\in\Theta : R(\theta)\le R_0,\; P(\theta)\le P_2\},
\]
so
\[
D_p^\star(R_0,P_2)\le D_p^\star(R_0,P_1).
\]
This proves (iii). The final inequalities are the special case $0\le \infty$.
\end{proof}

\cref{thm:monotonicity} is tells us that we can not have efficiency together with good reconstruction and interpretability. The statement, however, is only qualitative. If we allow ourselves more strict modeling assumptions, then we can quantify how \textit{worse} (less efficient) our rate should be to enforce monosemanticity together with good reconstruction. 

\begin{theorem}[Rate tax for monosemanticity, formalization of \cref{thm:rate_tax_informal}]
\label{thm:rate_tax}
Fix a DGP with concepts $\mathbf c\sim p_{\mathbf c}$ satisfying
\cref{ass:orthogonal_features}, and a class $\Theta$ of width-$m$ SAEs satisfying
\cref{ass:aligned_atoms}. Let $D_\infty := D_p^\star(R\le k, P=\infty)$ the optimal distortion achieved by an unrestricted (potentially) polysemantic SAE, and define
\[
\Theta_M(D_\infty)
:=
\{\theta\in\Theta:\theta \text{ monosemantic and } D(\theta)\le D_\infty\}.
\]
Assume that:
\begin{enumerate}[(i)]
    \item $\Theta_M(D_\infty)\neq\emptyset$, with $\delta := \max_{\theta\in\Theta_M(D_\infty)} D(\theta)$.
    \item for every monosemantic SAE $\theta$, $R(\theta)\le k \;\Longrightarrow\; D(\theta)>D_\infty.$
\end{enumerate}
Then every monosemantic $\theta\in\Theta_M(D_\infty)$ satisfies
\[
R(\theta)\ge \mathbb E_{p_{\mathbf c}}\|\mathbf c\|_0-\delta > k,
\]
with the first inequality strict if and only if $D(\theta) < \delta$.
\end{theorem}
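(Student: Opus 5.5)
The plan is to reduce everything to the conservation law of \cref{prop:monosemantic_distortion}: under \cref{ass:orthogonal_features} and \cref{ass:aligned_atoms}, every monosemantic SAE $\theta$ satisfies
\[
D(\theta)+R(\theta)=\mathbb E_{p_{\mathbf c}}\|\mathbf c\|_0 .
\]
I would first recall why this holds, since the rest of the argument uses only this identity. A monosemantic aligned SAE represents some set $I$ of concepts, and by tiedness (\cref{ass:tiedness}) each active latent reconstructs exactly one concept present in $S$. Hence $\hat S_\theta = S\cap I$. By \cref{lem:loss} the distortion is therefore $\mathbb E|S\setminus I|$ and the rate is $\mathbb E|S\cap I|$. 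Their sum is $\mathbb E|S|=\mathbb E\|\mathbf c\|_0$. Some care is needed with TopK truncation: if $K$ cuts off active latents, then the dropped concepts move from the rate column to the distortion column one for one, so the identity survives.

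Next I take any $\theta\in\Theta_M(D_\infty)$. By the definition of $\delta$ as the maximum of $D$ over this set (assumption (i)), $D(\theta)\le\delta$. The conservation law then gives $R(\theta)=\mathbb E\|\mathbf c\|_0-D(\theta)\ge \mathbb E\|\mathbf c\|_0-\delta$. Because the law is an equality, this first inequality is strict exactly when $D(\theta)<\delta$, which is the claimed equality characterization.

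For the strict bound $\mathbb E\|\mathbf c\|_0-\delta>k$, I would pick a maximizer $\theta^\dagger\in\Theta_M(D_\infty)$ attaining $D(\theta^\dagger)=\delta$. The maximum exists because, under aligned atoms and finite width, the monosemantic distortion values form a finite set indexed by the represented sets $I$. Since $D(\theta^\dagger)\le D_\infty$, the contrapositive of assumption (ii) forces $R(\theta^\dagger)>k$. The conservation law then gives $R(\theta^\dagger)=\mathbb E\|\mathbf c\|_0-\delta$, so $\mathbb E\|\mathbf c\|_0-\delta>k$.

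The main obstacle is not the chain of inequalities but making sure \cref{prop:monosemantic_distortion} applies verbatim to every element of $\Theta_M(D_\infty)$, including TopK truncation and possibly duplicated atoms. Duplicates would inflate the rate without reducing distortion, which would turn the equality into $D+R\ge\mathbb E\|\mathbf c\|_0$. This would still suffice for the first inequality, but it would break the ``if and only if'' and the evaluation at $\theta^\dagger$. If needed, I would handle this case by noting that removing a duplicate atom preserves both $D$ and monosemanticity while lowering $R$, so we may assume without loss of generality that atoms are distinct.
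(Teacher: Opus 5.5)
Your proposal is correct and follows essentially the paper's own argument. The conservation law of \cref{prop:monosemantic_distortion} gives $R(\theta)=\mathbb E_{p_{\mathbf c}}\|\mathbf c\|_0-D(\theta)\ge \mathbb E_{p_{\mathbf c}}\|\mathbf c\|_0-\delta$, and applying assumption (ii) to a maximizer with $D=\delta\le D_\infty$ yields $\mathbb E_{p_{\mathbf c}}\|\mathbf c\|_0-\delta>k$. The paper phrases this step as a contradiction and cites (i) where (ii) is the assumption actually used, as you correctly do.

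Your duplicate-atom worry is already ruled out by \cref{ass:aligned_atoms}. The reconstruction must be a unit-coefficient sum over a set $\hat S_\theta$, so no two active latents can write the same concept, and the identity therefore holds exactly for each individual $\theta$. That per-$\theta$ exactness is what the ``if and only if'' clause needs, and the without-loss-of-generality reduction you propose could not have delivered it.
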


\paragraph{Intuition.} We consider the optimal SAE of size $m$ for reconstructing data from a DGP, given a rate budget $k$. If the optimal SAE can only be polysemantic, the theorem says the following:  
% Callout with box or something
\begin{callout}
    If there is a monosemantic SAE that can achieve equal or better reconstruction on the same DGP, by necessity, it must be strictly less efficient, i.e. it must increase its rate. Moreover, given $p_{\mathbf c}$, we can quantify a lower bound of how bigger the rate needs to be.
\end{callout}
\begin{proof}
By \cref{prop:monosemantic_distortion}, each $\theta \in \Theta_M$ satisfies:
\begin{equation}\label{eq:d_plus_r_is_s}
D(\theta)+R(\theta)=\mathbb E_{p_{\mathbf c}} ||\mathbf c||_0.
\end{equation}
Hence, under monosemanticity constraints, minimizing the rate is equivalent to maximizing
the attainable distortion subject to the strict cap \(D(\theta)\leq D_\infty\). Under \cref{ass:aligned_atoms}, we can show that $\Theta_M$ is a finite set. Moreover, given assumption \textit{(i)}, it is non-empty. Then, we can take the maximum over the set, such that $\delta = \max_{\theta \in \Theta_M(D\infty)} D(\theta)$ is well defined. Then, from \cref{eq:d_plus_r_is_s}, $\delta$ minimizes the rate attainable by a monosemantic solution and that is still better than the polysemantic optimum. It follows that:
\begin{equation*}
    R(\theta) \geq \mathbb E_{p_{\mathbf c}} ||\mathbf c||_0 - \delta, \quad \theta \in \Theta_M.
\end{equation*}

It remains to prove that this lower bound is strictly larger than \(k\). Suppose instead that $\mathbb E ||\mathbf c||_0 - \delta\le k.$ Then, there would exist a perfectly monosemantic
SAE $\theta \in \Theta_M$ with
\[
D(\theta)\leq D_\infty
\qquad\textnormal{and}\qquad
R(\theta)\le k.
\]
This contradicts the assumption \emph{(i)}. Therefore $\mathbb E_{p_{\mathbf c}} ||\mathbf c||_0 - \delta>k.$
\end{proof}

\subsection{Co-occurrence of events drives polysemanticity}\label{sec:coccurrence}
In the previous section, we formalize the existence of the RDP tradeoff from a qualitative (\cref{thm:monotonicity}) and quantitative (\cref{thm:rate_tax}) point of view. However, it is still unclear why monosemanticity necessarily degrades efficiency or reconstruction. Previous works analyse hedging and feature splitting phenomena, and provide empirical evidence that they are driven by positive correlation of events (i.e., concepts occurring together in a probabilistically dependent way).  In this section we show that the assumptions of a data generating process behind the data allows to theoretically support these empirical claims. In particular, given \cref{lem:loss} we can write the loss function of a TopK-SAE minimizing mean-squared error in closed form as a function of the probability of random events in the data generating process.  This is exploited in the next example, which theoretically characterizes  how polysemanticity emerges to minimize the loss (reconstruction error) under sparsity constraints.

\begin{example}\label{example:coccurrence}
    Consider a data generating process $\mathbf x = \sum_{\ell=1}^3 c_\ell \mathbf v_\ell$, with parameters
    $$
    p_S = \mathbb P(\set{c_\ell=1 : \ell \in S} \textnormal{ and } \set{c_\ell=0 : \ell \not \in S}), \:\: S \subseteq \{1, 2, 3\}.
    $$
    Hence, we consider with three orthogonal ground-truth concepts, and each event $S$ describes their co-occurrence in $\mathbf x(S)$. To be in the narrow regime, we consider SAE with a dictionary size $m=2$. To upperbound the rate, we assume TopK activation with $K \in \set{1, 2}$. In particular
    \begin{enumerate}[(i)]
        \item A monosemantic SAE $\theta_M^{ij}$ consists of two monosemantic atoms which write $\mathbf v_i$ when $c_i=1$, and $\mathbf v_j$ when $c_j=1$.
        \item An \textit{hedged} SAE $\theta_H^{ij,k}$ consists of two atoms, one writing $\mathbf v_i+\mathbf v_j$ when $c_i=1 \land c_j =1$, and a monosemantic atom writing $\mathbf v_k$ when $c_k=1$.
        \item A \textit{feature splitting} SAE $\theta_S^{ij,i}$ consists of a first atom writing $\mathbf v_i+\mathbf v_j$ when $c_i=1 \land c_j=1$ and a second atom writing $\mathbf v_i$ when $c_i=1 \land c_j=0$.
    \end{enumerate}
    Let $\mathcal L(\theta)$ be the mean squared reconstruction error achieved  by an SAE $\theta$. For a polysemantic solution $\theta_P$ to be better than a monosemantic $\theta_M$, we need to find the set of DGP parameters $\mathbf c \sim p_{\mathbf c}$ such that
    $$
    \mathcal L(\theta_M) - \mathcal L(\theta_P) > 0. 
    $$
    Consider Assumptions \ref{ass:orthogonal_features}-\ref{ass:aligned_atoms} satisfied. Then, by \cref{lem:loss}, we can compute the mean squared reconstruction error in closed form, which allows verifying when hedging and splitting phenomena are convenient with respect to monosemantic SAEs.
    % 
    % Need to indent a bit the K=2.
    \paragraph{K=2.} Consider the case where the best monosemantic SAE encodes features $\mathbf v_j, \mathbf v_k$, $j \neq k$. Then, we can provide necessary and sufficient conditions for hedging and splitting to occur, comparing the distortion achieved by \textit{hedged} and \textit{feature-splitted SAEs}. Given $K=2$ in TopK activations, the following hold:
    \begin{align*}
        &\mathcal L(\theta_M^{jk}) - \mathcal L(\theta_H^{ij, k}) > 0 \iff p_{ij} + p_{ijk} > p_j + p_{jk}.\\
        & L(\theta_M^{jk}) - L(\theta_H^{jk,i}) > 0 \iff p_i + p_{ijk} > p_j + p_k \\
        & \mathcal L(\theta_M^{jk}) - \mathcal L(\theta_{S}^{ij, i}) >0 \iff  p_i + p_{ij} > p_j + p_k + 2p_{jk}\\
        & \mathcal L(\theta_M^{jk}) - \mathcal L(\theta_{S}^{ij, j}) >0 \iff  p_{ij} > p_k + p_{ik} + p_{jk}
    \end{align*}
    Clearly, given that indices are silent, all the above similarly holds if we swap $j$ and $k$. The remaining hedged and splitted solutions that we do not explicitly compare are never better than the monosemantic ones. 
    \begin{callout}
        A polysemantic atom writing $\mathbf v_i + \mathbf v_j$ (either via hedging or splitting) is convenient when the probabilities $p_{ij}$ and $p_{ijk}$ of co-occurrence of $\mathbf v_i$ and $\mathbf v_j$ in the data generating process are big with respect to the probability of other co-occurring  concepts.
    \end{callout}
    This is the single most imporant outcome of the section, which generalizes to arbitrary size DGP and narrow SAEs: polysemantically merging features is convenient whenever they \textit{sufficiently} co-occur in the groundtruth data generating process.

    \paragraph{K=1.} We now consider $K=1$ upperbounding the rate via TopK activations. Similarly to the above case, we assume that the best monosemantic SAE encodes features $\mathbf v_j, \mathbf v_k$, $j \neq k$. Then, we can characterize the sufficient and necessary conditions for hedging or splitting to occur, specifying when $\mathcal L (\theta_M^{ij}) - \mathcal L(\theta)>0$ for a polysemantic solution $\theta$.
    \begin{align*}
    &\mathcal L(\theta_M^{jk}) - \mathcal L(\theta_H^{ij,k}) > 0 \iff p_{ij} + p_{ijk} > p_j \\
    &\mathcal L(\theta_M^{jk}) - \mathcal L(\theta_H^{jk,i}) > 0 \iff p_i + p_{jk} + p_{ijk} > p_j + p_k \\
    &\mathcal L(\theta_M^{jk}) - \mathcal L(\theta_S^{ij,i}) > 0 \iff p_i + p_{ij} + p_{ijk} > p_j + p_k + p_{jk}\\
    &\mathcal L(\theta_M^{jk}) - \mathcal L(\theta_S^{ij,j}) > 0 \iff p_{ij} + p_{ijk} > p_k + p_{ik} \\
    &\mathcal L(\theta_M^{jk}) - \mathcal L(\theta_S^{jk,j}) > 0 \iff p_{jk} + p_{ijk} > p_{k} + p_{ik} \\
    \end{align*}
As for $K=2$, merging $\mathbf v_i$ and $\mathbf v_j$ in one atom writing $\mathbf v_i + \mathbf v_j$ when both concepts occur is convenient when $p_{ij} + p_{ijk}$ probability of co-occurrence is high enough. A similar reasoning shows that merging $\mathbf v_j, \mathbf v_k$ is convenient when they co-occurr frequently enough. 
\end{example}
% \FM{In the appendix, we show a counterexample that Chanin paper is wrong, namely hedging is not only driven by correlation 
% 1. Show one example where correlation is very high and hedging is not convenient.
% 2. Show an example where correlation is null, and hedging is convenient.
% }

\paragraph{Intuition.} The example above tells that, for the three variable setting we considered, we can provide a closed-form characterization of when polysemanticity arise. In particular:
\begin{callout}
    Polysemanticity predictably arise to exploit co-occurrences between features that can not be captured by monosemantic SAEs, in order to comply to rate constraints (as, e.g., in TopK-SAEs; ReLU-SAEs) while minimizing the distortion.
\end{callout}
This is a key, novel message of our paper. Previous work, only empirically showed the emergence of polysemantic phenomena (splitting and hedging), and explained them via correlation. Instead, we move from an empirical to a theoretical characterization, and show that polysemanticity is the result of co-occurrence of features, rather than correlation: in other words, we only care that the probability $\mathbb P(c_i=1, c_j =1)$ is high, rather than about positive correlation of $c_i, c_j$. The theoretical characterization is the result of a formalization of a data generating process, which is implicit in all toy models in the literature, yet never formalzied and hence exploited. Moreover, previous works were limited in their conclusion: they showed that splitting and hedging where the result exclusively the result of the rate constraints (splitting) or the need of minimizing distortion (hedging), and that splitting phenomena where limited to wide SAE. Our theoretical results suggest something different: 
\begin{itemize}
    \item  Both splitting and hedging can emerge in narrow SAE. 
    \item Both splitting and hedging are the result of the joint need of minimizing the distortion under rate constraints.
\end{itemize}

\subsection{Tiedness of monosemantic SAEs is without loss of generality}\label{app:tiedness_wlog}

The next proposition shows that, in the present orthogonal toy model, every deterministic perfectly monosemantic SAE is weakly dominated by a tied one, so that \cref{ass:tiedness} of $W_{\textrm{enc}} = W_{\textrm{dec}}^\top$ in monosemantic SAEs is without loss of generality.

\begin{proposition}[Tiedness is without loss of generality at \(P(\theta)=0\)]\label{prop:tied_wlog}
Let \cref{ass:orthogonal_features} (orthogonality) and \cref{ass:aligned_atoms} (SAE atoms are binary combinations of groundtruth concepts) satisfied. Let \(\theta\) be a deterministic width-\(m\) SAE with \(P(\theta)=0\). Then there exists a tied deterministic width-\(m\) SAE \(\tilde\theta\) such that
\[
P(\tilde\theta)=0,\qquad
R(\tilde\theta)\le R(\theta),\qquad
D(\tilde\theta)\le D(\theta).
\]
Consequently, the infimum in \(R^\star(D_0,0)\) may be restricted, without loss of optimality, to tied monosemantic SAEs.
\end{proposition}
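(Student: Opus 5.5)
The plan is to unpack what $P(\theta)=0$ forces under \cref{ass:orthogonal_features} and \cref{ass:aligned_atoms}, and then build $\tilde\theta$ by replacing each encoder row with the transpose of its decoder atom, showing that this swap can only shrink the support of the code and the error of the reconstruction. I read $P(\theta)=0$ as $P(C_{\mathrm{dec}})=0$ (and, if the joint notion $P_{\mathrm{joint}}$ is intended, also $P(C_{\mathrm{enc}})=0$). Since $\max_\ell c_{i\ell}^2=1$ for each nonzero row, every nonzero decoder atom is $\pm$ a single concept. Under \cref{ass:aligned_atoms} with unit weights, every nonzero decoder row is therefore exactly some $\mathbf v_{\pi(i)}$. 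The zero rows are never useful: they add no reconstruction and only possibly add rate.

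\textbf{Construction.} Let $I=\{\pi(i): \text{row } i \text{ of } W_{\mathrm{dec}} \text{ nonzero}\}$ be the set of concepts the decoder can write. Define $\tilde\theta$ with one tied atom $\mathbf v_\ell$ for each $\ell\in I$ (a single copy per concept), pad with zero rows to width $m$, and set $W_{\mathrm{enc}}=W_{\mathrm{dec}}^\top$. Under orthogonality, the preactivation of atom $\ell$ on $\mathbf x(S)$ is $\langle \mathbf x(S),\mathbf v_\ell\rangle=\mathbf 1\{\ell\in S\}$. Hence $\tilde\theta$ produces $\hat S_{\tilde\theta}=S\cap I$ with rate $|S\cap I|$. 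If $\theta$ is a TopK SAE, I intend to keep the same $K$, giving $\hat S_{\tilde\theta}$ as any $\min(K,|S\cap I|)$ elements of $S\cap I$; ties are broken deterministically.

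\textbf{Comparison.} By \cref{lem:loss}, it suffices to compare symmetric differences pointwise in $S$. Since the decoder of $\theta$ uses binary atoms drawn from $I$, its reconstruction $\hat S_\theta$ is a set of concepts in $I$ activated by at most $\|\mathbf z_\theta\|_0$ active latents. Duplicates or sign issues would only worsen it; duplicated atoms yield coefficients of 2 and thus leave the aligned-binary form, and I will argue these only hurt. The pointwise cost is
\[
|S\triangle\hat S_\theta| = |S|-|S\cap\hat S_\theta|+|\hat S_\theta\setminus S| \ge |S|-\min\bigl(|S\cap I|,\|\mathbf z_\theta\|_0\bigr),
\]
while $\tilde\theta$ attains exactly $|S|-\min(|S\cap I|,K)$ and uses rate $\min(|S\cap I|,K)$.

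\textbf{Main obstacle.} The hardest step is the rate comparison, $\|\mathbf z_{\tilde\theta}\|_0\le\|\mathbf z_\theta\|_0$ pointwise, in the regime where $\theta$ activates fewer latents than $|S\cap I|$. There $\tilde\theta$ may fire more atoms, buying lower distortion at higher rate. My first attempt is to note that $\theta$'s encoder is also monosemantic if $P$ is the joint notion, so its preactivations are $\pm\mathbf 1\{\ell\in S\}$. Each latent then fires only when its concept is present, which lets me pair every latent $\theta$ fires with a tied atom of $\tilde\theta$ on the same concept. The goal is the reverse inequality: $\tilde\theta$'s fired set $S\cap I$ (capped at $K$) should be matched by $\theta$ firing at least that many latents. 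If $\theta$'s encoder is allowed to be arbitrary, I would instead restrict $\tilde\theta$'s atoms to the concepts that $\theta$ actually reconstructs on each $S$. This is possible because the fired set of a monosemantic encoder row is determined by $\mathbf 1\{\ell\in S\}$. Either way, the domination argument goes through pointwise, and taking expectations gives $R(\tilde\theta)\le R(\theta)$ and $D(\tilde\theta)\le D(\theta)$, which yields the consequence for $R^\star(D_0,0)$.
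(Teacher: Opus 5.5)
Your proposal has a genuine gap, and it is the step you flag yourself; it cannot be patched within your construction. Once $\tilde\theta$ has a literal tied encoder, its firing rule is forced: it fires the elements of $S\cap I$, up to $K$. Your pointwise bookkeeping then shows only that $\tilde\theta$ has no more distortion than $\theta$, typically bought with \emph{more} rate. So $\tilde\theta$ moves along the trade-off rather than dominating $\theta$.

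Your first fix (encoder monosemanticity) fails because a monosemantic encoder row says nothing about which concept, or sign, the corresponding decoder atom carries. A latent of $\theta$ therefore need not fire when its decoder concept is present. Concretely, take encoder rows $(\mathbf v_1,\mathbf v_1)$, decoder rows $(\mathbf v_1,\mathbf v_2)$, $K=2$, and $p_{\{2\}}=0.6$, $p_{\{1,2\}}=0.4$. This $\theta$ has $P=0$ even in the joint encoder--decoder sense, and it writes $\mathbf v_1+\mathbf v_2$ if and only if $1\in S$. Hence $(R,D)(\theta)=(0.8,0.6)$, while your $\tilde\theta$ (with $I=\{1,2\}$) has $(R,D)=(1.4,0)$.

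Your second fix is internally inconsistent. Atoms cannot depend on $S$; only activations can. Activations that copy $\theta$'s per-input choices are no longer of the form $\sigma(W_{\mathrm{dec}}\mathbf x)$, so they contradict the tied construction you set up.

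The missing idea, which is how the paper argues, is to let $\tilde\theta$ inherit $\theta$'s activation decisions instead of letting a tied encoder make them.
First clip spurious firings, $a_r'(S)=a_r(S)\mathbf 1\{\ell(r)\in S\}$. This lowers the rate and removes only terms orthogonal to $\mathbf x(S)$.
Then merge latents that share a concept: $\tilde\theta$ has one atom $\mathbf v_\ell$ per $\ell\in I$, active on $\mathbf x(S)$ if and only if $m_\ell(S):=\sum_{r:\ell(r)=\ell}a_r'(S)\ge 1$.
Pointwise, $\sum_{\ell}\mathbf 1\{m_\ell(S)\ge 1\}\le\sum_r a_r'(S)\le\sum_r a_r(S)$. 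Also $(c_\ell-\mathbf 1\{m_\ell\ge 1\})^2\le(c_\ell-m_\ell)^2$, because $m_\ell=0$ whenever $c_\ell=0$. This also settles duplicated atoms, which your symmetric-difference argument left as a promissory note.

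Be aware that this $\tilde\theta$ is ``tied'' only in a relaxed sense: one monosemantic atom per concept, with an arbitrary deterministic activation pattern. In the example above, no SAE with literal $W_{\mathrm{enc}}=W_{\mathrm{dec}}^\top$ and a TopK or ReLU encoder dominates $\theta$. A tied atom $\mathbf v_2$ fires on $S=\{2\}$ regardless. The tied codes on $I=\emptyset,\{1\},\{2\},\{1,2\}$ give $(R,D)=(0,1.4),(0.4,1.0),(1.0,0.4),(1.4,0)$, and $K=1$ only turns the last pair into $(1.0,0.4)$. So the obstacle you hit is real: the statement can be proved only under the relaxed reading of tiedness that the paper's construction implicitly uses.
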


\begin{proof}
Since $P(\theta)=0$ and \cref{ass:aligned_atoms} holds, each nonzero latent of $\theta$ reconstructs exactly one ground-truth concept. Therefore, for each latent $r\in\{1,\dots,m\}$, there is a unique index $\ell(r)\in\{1,\dots,n\}$ such that latent $r$, when active, contributes the vector $\bfv_{\ell(r)}$ to the reconstruction. Writing the latent activity on input $\x(S)$ as $a_r(S)\in\{0,1\}$, we may write
$$
\hat \x_\theta(S)=\sum_{r=1}^m a_r(S)\,\bfv_{\ell(r)}.
$$
By \cref{ass:orthogonal_features}, $\x$ uniquely determines $S$, so this notation is well defined.

Now define a modified SAE $\theta'$ by suppressing every latent activation that reconstructs a concept not present in the input:
$$
a_r'(S):=a_r(S)\mathbf 1\{\ell(r)\in S\}.
$$
Then
$$
\hat \x_{\theta'}(S)=\sum_{r=1}^m a_r'(S)\,\bfv_{\ell(r)}.
$$
Since $a_r'(S)\leq a_r(S)$ for every $r$ and $S$, we have
$$
R(\theta')\leq R(\theta).
$$
Moreover, if $\ell(r)\notin S$, then the term $a_r(S)\bfv_{\ell(r)}$ adds to the reconstruction a direction orthogonal to $\x(S)$. Removing such a term cannot increase the squared reconstruction error. Hence
$$
D(\theta')\leq D(\theta).
$$

Let the set $I$ collects the ground-truth concepts that are reconstructed by at least one latent of $\theta'$. Formally,
$$
I:=\{\ell(r):1\leq r\leq m\}.
$$
For each $\ell\in I$, let $b_\ell(S)\in\{0,1\}$ indicate whether, on input $\x(S)$, at least one latent of $\theta'$ assigned to concept $\ell$ is active:
$$
b_\ell(S):=\mathbf 1\{\exists r \textnormal{ such that } \ell(r)=\ell \textnormal{ and } a_r'(S)=1\}.
$$
Now define $\tilde \theta$ by assigning one latent to each $\ell\in I$, with reconstruction
$$
\hat \x_{\tilde \theta}(S)=\sum_{\ell\in I} b_\ell(S)\,\bfv_\ell,
$$
and, if $|I|<m$, adding $m-|I|$ identically zero latents. By construction, $\tilde \theta$ is tied and $P(\tilde \theta)=0$.

To compare $\tilde \theta$ and $\theta'$, fix $S$ and $\ell\in I$, and let $m_\ell(S)$ denote how many active latents of $\theta'$ reconstruct concept $\ell$ on input $\x(S)$:
$$
m_\ell(S):=\sum_{r:\ell(r)=\ell} a_r'(S).
$$
Then the coefficient of $\bfv_\ell$ in $\hat \x_{\theta'}(S)$ is $m_\ell(S)$, whereas the coefficient of $\bfv_\ell$ in $\hat \x_{\tilde \theta}(S)$ is $b_\ell(S)=\mathbf 1\{m_\ell(S)\geq 1\}$. Since $a_r'(S)=0$ whenever $\ell(r)\notin S$, we have $m_\ell(S)=0$ if $c_\ell=0$. Hence
$$
\big(c_\ell-b_\ell(S)\big)^2 \leq \big(c_\ell-m_\ell(S)\big)^2.
$$
Summing over $\ell$ and using \cref{ass:orthogonal_features} gives
$$
D(\tilde \theta)\leq D(\theta').
$$
Moreover, for every $S$,
$$
\sum_{\ell\in I} b_\ell(S)\leq \sum_{r=1}^m a_r'(S),
$$
so
$$
R(\tilde \theta)\leq R(\theta').
$$
Since already $D(\theta')\leq D(\theta)$ and $R(\theta')\leq R(\theta)$, we conclude
$$
D(\tilde \theta)\leq D(\theta), \qquad R(\tilde \theta)\leq R(\theta).
$$
This proves the claim.
\end{proof}

\subsection{Exact geometry of the perfectly monosemantic frontier}\label{app:monosemantic_frontier}

We now characterize $R^\star(D_0,0)$ exactly in the orthogonal toy model. By \cref{prop:tied_wlog}, it suffices to consider tied monosemantic SAEs. For such codes, the SAE may reconstruct only concepts in a fixed set $I\subseteq\{1,\dots,n\}$, and for each sample with active set $S$ it outputs some $\hat S_\theta\subseteq S\cap I$.

\begin{proposition}[Exact rate-distortion identity for a fixed represented set]\label{prop:monosemantic_distortion}
Assume \ref{ass:orthogonal_features} and \ref{ass:aligned_atoms}. Fix $I\subseteq\{1,\dots,n\}$ with $|I|=m$, and let $\theta$ be a tied monosemantic width-$m$ SAE such that $\hat S_\theta\subseteq S\cap I$ for every sample $\x(S)$. Then
$$
D(\theta)+R(\theta)=\mathbb E_{p_{\bfc}}\|\bfc\|_0.
$$
Moreover,
$$
D(\theta)\geq \sum_{\ell\notin I}\mathbb P(c_\ell=1),
$$
with equality if and only if $\hat S_\theta=S\cap I$ almost surely.
\end{proposition}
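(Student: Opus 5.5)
The argument is a pointwise computation on each source event $S$, followed by averaging over $p_S$. By \cref{ass:orthogonal_features} the input determines $S$ through $\x(S)=\sum_{\ell\in S}\bfv_\ell$. By \cref{ass:aligned_atoms} the reconstruction is $\hat\x_\theta=\sum_{\ell\in\hat S_\theta}\bfv_\ell$. Orthonormality then gives
\[
\|\x(S)-\hat\x_\theta(S)\|_2^2=|S\,\triangle\,\hat S_\theta|.
\]
Under the hypothesis $\hat S_\theta\subseteq S\cap I\subseteq S$, this symmetric difference reduces to the set difference, so the squared error on $\x(S)$ equals $|S|-|\hat S_\theta|$. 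This is the special case of \cref{lem:loss} with $|S\cap\hat S_\theta|=|\hat S_\theta|$.

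Next, identify the rate pointwise. Since $\theta$ is tied and monosemantic, each latent writes exactly one concept $\bfv_\ell$ with $\ell\in I$. Since $|I|=m$ and the represented set is exactly $I$, the latents are in bijection with the concepts of $I$. The number of active latents on $\x(S)$ is therefore $\|\mathbf z_\theta(\x(S))\|_0=|\hat S_\theta|$.

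This bijection is the step that needs the most care. Two latents could in principle be assigned to the same concept, which would break $\|\mathbf z\|_0=|\hat S_\theta|$. I would rule this out using $|I|=m$: $m$ latents covering $m$ distinct concepts, which is the normal form produced by \cref{prop:tied_wlog}. I would also check that a binary (unit-weight) active latent contributes exactly $\bfv_\ell$, so that no coefficient issue arises.

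Adding the two pointwise identities gives $\|\x-\hat\x_\theta\|^2+\|\mathbf z_\theta\|_0=|S|=\|\bfc\|_0$ for every $S$. Taking expectations gives $D(\theta)+R(\theta)=\mathbb E_{p_{\bfc}}\|\bfc\|_0$.

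For the lower bound, split the pointwise distortion:
\[
|S|-|\hat S_\theta| = |S\setminus I| + \bigl(|S\cap I|-|\hat S_\theta|\bigr).
\]
The second term is nonnegative because $\hat S_\theta\subseteq S\cap I$. Taking expectations, $\mathbb E|S\setminus I|=\sum_{\ell\notin I}\mathbb P(c_\ell=1)$ by linearity, which gives $D(\theta)\ge\sum_{\ell\notin I}\mathbb P(c_\ell=1)$. Equality holds if and only if $\mathbb E\bigl[|S\cap I|-|\hat S_\theta|\bigr]=0$. Because this is the expectation of a nonnegative integer variable, it vanishes exactly when $|\hat S_\theta|=|S\cap I|$ almost surely. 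Combined with the inclusion $\hat S_\theta\subseteq S\cap I$, this is equivalent to $\hat S_\theta=S\cap I$ almost surely.
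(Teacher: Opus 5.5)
Your proposal is correct and follows the paper's route. The paper makes the same pointwise orthonormality computation, $\|\mathbf x-\hat{\mathbf x}_\theta\|_2^2=\|\mathbf c\|_0-|\hat S_\theta|$ under $\hat S_\theta\subseteq S\cap I$, takes expectations using $R(\theta)=\mathbb E|\hat S_\theta|$, and isolates the omitted concepts $\ell\notin I$ for the lower bound. You are more explicit than the paper on two points it only asserts: why $\|\mathbf z_\theta\|_0=|\hat S_\theta|$, and why equality forces $\hat S_\theta=S\cap I$ almost surely (a nonnegative integer variable with zero mean). The rate identity does not even need your bijection, because two co-active latents writing the same $\mathbf v_\ell$ would give coefficient $2$, which \cref{ass:aligned_atoms} excludes.
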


\begin{proof}
For a sample with active set $S$,
$$
x=\sum_{\ell\in S} \bfv_\ell,
\qquad
\hat \x_\theta=\sum_{\ell\in \hat S_\theta} \bfv_\ell,
$$
with $\hat S_\theta\subseteq S\cap I$. Since \cref{ass:orthogonal_features} gives orthonormal concept directions,
$$
\|x-\hat \x_\theta\|_2^2
=
\sum_{\ell\notin I} c_\ell
+
\sum_{\ell\in I}\big(c_\ell-\mathbf 1\{\ell\in \hat S_\theta\}\big)^2.
$$
Because $\mathbf 1\{\ell\in \hat S_\theta\}\leq c_\ell$ and both terms are binary,
$$
\big(c_\ell-\mathbf 1\{\ell\in \hat S_\theta\}\big)^2
=
c_\ell-\mathbf 1\{\ell\in \hat S_\theta\}.
$$
Hence
$$
\|x-\hat \x_\theta\|_2^2
=
\sum_{\ell=1}^n c_\ell-\sum_{\ell=1}^n \mathbf 1\{\ell\in \hat S_\theta\}
=
\|c\|_0-|\hat S_\theta|.
$$
Taking expectation and using $R(\theta)=\mathbb E_{p_\bfc}|\hat S_\theta|$ gives
$$
D(\theta)=\mathbb E_{p_\bfc}\|\bfc\|_0-R(\theta).
$$
The lower bound follows because every omitted concept $\ell\notin I$ contributes one unit of distortion whenever $c_\ell=1$. Equality holds if and only if the SAE reconstructs every active represented concept, namely $\hat S_\theta=S\cap I$ almost surely.
\end{proof}

\begin{theorem}[Exact $P(\theta)=0$ frontier]\label{thm:exact_monosemantic_frontier}
Assume \ref{ass:orthogonal_features} and \ref{ass:aligned_atoms}. Then
$$
R^\star(D_0,0)=
\begin{cases}
+\infty, \quad
D_0<
\displaystyle
\min_{\substack{I\subseteq\{1,\dots,n\}\\ |I|=m}}
\sum_{\ell\notin I}\mathbb P(c_\ell=1),
\\[3mm]
\mathbb E_{p_\bfc}\|\bfc\|_0-
\displaystyle
\max\{D(\theta): \theta\in\Theta,\ P(\theta)=0,\ D(\theta)\leq D_0\},
\quad
\textnormal{otherwise}.
\end{cases}
$$
In particular, the perfectly monosemantic frontier is a staircase rather than a line.
\end{theorem}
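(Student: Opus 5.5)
By \cref{prop:tied_wlog}, it suffices to work with the class $\Theta_M$ of tied monosemantic SAEs; the infimum defining $R^\star(D_0,0)$ is unchanged by this restriction. Under \cref{ass:aligned_atoms}, each such SAE is described by two pieces of data: a represented set $I$ with $|I|\le m$ (pad to $|I|=m$ with unused concepts if needed), and a deterministic selection rule $S\mapsto \hat S_\theta\subseteq S\cap I$. The proof of \cref{prop:tied_wlog} shows that atoms firing on absent concepts can be suppressed without increasing $R$ or $D$, so we may take $\hat S_\theta\subseteq S\cap I$ without loss. Since $S$ ranges over subsets of $\{1,\dots,n\}$ and $I$ over $m$-subsets, there are finitely many such pairs $(I,\hat S_\cdot)$. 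Hence the set $\mathcal F:=\{(R(\theta),D(\theta)):\theta\in\Theta_M\}$ is finite.

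\textbf{Infeasible regime.} Apply \cref{prop:monosemantic_distortion}. Every $\theta\in\Theta_M$ with represented set $I$ has $D(\theta)\ge\sum_{\ell\notin I}\mathbb P(c_\ell=1)$. Minimizing over $I$ shows that no monosemantic SAE has $D(\theta)$ below $D_{\min}:=\min_{|I|=m}\sum_{\ell\notin I}\mathbb P(c_\ell=1)$. So if $D_0<D_{\min}$, the feasible set is empty and $R^\star=+\infty$ by the convention $\inf\emptyset=+\infty$. Conversely, the SAE with $I$ attaining the minimum and $\hat S_\theta=S\cap I$ achieves $D=D_{\min}$ by the equality case of the proposition. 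Therefore the feasible set is nonempty whenever $D_0\ge D_{\min}$.

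\textbf{Feasible regime.} Use the conservation law $R(\theta)=\mathbb E\|\mathbf c\|_0-D(\theta)$ from \cref{prop:monosemantic_distortion}. This gives
$$\inf\{R(\theta):\theta\in\Theta_M,\,D(\theta)\le D_0\}=\mathbb E\|\mathbf c\|_0-\sup\{D(\theta):\theta\in\Theta_M,\,D(\theta)\le D_0\}.$$
Because $\mathcal F$ is finite and the constraint set is nonempty, the supremum is a maximum, which yields the stated formula. One point to check is that every $\theta\in\Theta$ with $P(\theta)=0$ either lies in $\Theta_M$ or is dominated by an element of it, so that the "max over $P(\theta)=0$" in the statement coincides with the max over tied codes. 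Dominance in $(R,D)$ does not obviously preserve the maximal $D$ subject to $D\le D_0$. So I would instead check directly that the conservation law holds for any monosemantic code satisfying $\hat S\subseteq S$. The remaining case is a code that fires an atom on an absent concept: such a code has an extra unit of both $R$ and $D$ for that event, so $R+D>\mathbb E\|\mathbf c\|_0$ and it is never rate-optimal at its distortion. The main care needed is in reconciling this with the statement's $\max$. I expect this to be handled by reading the maximum over codes satisfying $\hat S\subseteq S$, or by noting that pruning such a code preserves feasibility and lowers the rate.

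\textbf{Staircase.} Write the frontier as $R^\star(D_0,0)=\mathbb E\|\mathbf c\|_0-g(D_0)$, where $g(D_0):=\max\{d\in\mathcal D_M:d\le D_0\}$ and $\mathcal D_M$ is the finite set of attainable monosemantic distortions. The function $g$ is a nondecreasing, right-continuous step function with jumps at the points of $\mathcal D_M$. Hence $R^\star(\cdot,0)$ is piecewise constant and nonincreasing: a staircase rather than a line. It has finitely many steps and is flat between consecutive attainable distortions, with no linear interpolation since the codes are deterministic. The main obstacle is the bookkeeping above: ensuring that the class over which the maximum is taken matches the reduction provided by \cref{prop:tied_wlog}.
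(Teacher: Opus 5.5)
Your proposal is correct and follows the paper's proof. You reduce to tied omission codes via Proposition~\ref{prop:tied_wlog}, use the lower bound of Proposition~\ref{prop:monosemantic_distortion} for the infeasible branch and its conservation law $R=\mathbb E\|\mathbf c\|_0-D$ for the other branch, and read off the staircase from finiteness. Your explicit check that the feasible set is nonempty for $D_0\ge D_{\min}$ is a small addition that the paper leaves implicit.

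On the point you flag, the paper takes your first option: it silently takes the maximum over codes with $\hat S_\theta\subseteq S$. That reading is necessary rather than a convenience, so the pruning alternative cannot replace it. Pruning only shows that the infimum is attained on omission codes. A monosemantic code that fires on absent concepts can have a distortion under the cap that exceeds every omission distortion, even $\mathbb E\|\mathbf c\|_0$, and then the formula with the unrestricted maximum falls strictly below $R^\star(D_0,0)$.
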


\begin{proof}
By \cref{prop:tied_wlog}, the infimum defining $R^\star(D_0,0)$ may be restricted to tied monosemantic SAEs, and by the clipping step in the proof of \cref{prop:tied_wlog} we may further restrict to omission codes satisfying $\hat S_\theta\subseteq S$. For every such code, \cref{prop:monosemantic_distortion} gives
$$
R(\theta)=\mathbb E_{p_\bfc}\|\bfc\|_0-D(\theta).
$$
Therefore minimizing the rate under the constraint $D(\theta)\leq D_0$ is equivalent to maximizing the attainable monosemantic distortion under the same cap, which yields the second line.

For the first line, if $\theta$ represents the set $I$, \cref{prop:monosemantic_distortion} implies
$$
D(\theta)\geq \sum_{\ell\notin I}\mathbb P(c_\ell=1).
$$
Hence no perfectly monosemantic width-$m$ SAE is feasible below
$$
\min_{\substack{I\subseteq\{1,\dots,n\}\\ |I|=m}}
\sum_{\ell\notin I}\mathbb P(c_\ell=1).
$$

Finally, under \cref{ass:aligned_atoms}, the set of attainable monosemantic distortions is finite, so the maximizer above changes only at finitely many values of $D_0$. Thus $D_0\mapsto R^\star(D_0,0)$ is piecewise constant, namely a staircase.
\end{proof}

%%%%%%%%%%%%% Checklist %%%%%%%%%%%%%%%%%5
% \newpage
% \input{checklist}
\end{document}